\titlespacing{\section}{0pt}{7pt}{4pt}
\titlespacing{\subsection}{0pt}{5pt}{3pt}
\renewcommand*{\backref}[1]{}
\renewcommand*{\backrefalt}[4]{%
    \ifcase #1%
          \or [Cited on p.~#2.]%
          \else [Cited on p.~#2.]%
    \fi%
    }
\numberwithin{equation}{section}
\renewcommand{\mathbf}[1]{{\bm{#1}}}
\newcommand{\ab}{\mathbf{a}}
\newcommand{\bb}{\mathbf{b}}
\newcommand{\cbb}{\mathbf{c}}
\newcommand{\fb}{\mathbf{f}}
\newcommand{\gb}{\mathbf{g}}
\newcommand{\hb}{\mathbf{h}}
\newcommand{\wb}{\mathbf{w}}
\newcommand{\xb}{\mathbf{x}}
\newcommand{\zb}{\mathbf{z}}
\newcommand{\Ab}{\mathbf{A}}
\newcommand{\Bb}{\mathbf{B}}
\newcommand{\Cb}{\mathbf{C}}
\newcommand{\Hb}{\mathbf{H}}
\newcommand{\Ib}{\mathbf{I}}
\newcommand{\Jb}{\mathbf{J}}
\newcommand{\Kb}{\mathbf{K}}
\newcommand{\Mb}{\mathbf{M}}
\newcommand{\Ob}{\mathbf{O}}
\newcommand{\Vb}{\mathbf{V}}
\newcommand{\Wb}{\mathbf{W}}
\newcommand{\Xb}{\mathbf{X}}
\newcommand{\Yb}{\mathbf{Y}}
\newcommand{\Zb}{\mathbf{Z}}
\newcommand{\Acal}{\mathcal{A}}
\newcommand{\Dcal}{\mathcal{D}}
\newcommand{\Lcal}{\mathcal{L}}
\newcommand{\Ncal}{\mathcal{N}}
\newcommand{\Pcal}{\mathcal{P}}
\newcommand{\Rcal}{\mathcal{R}}
\newcommand{\EE}{\mathbb{E}} %
\newcommand{\NN}{\mathbb{N}} %
\newcommand{\PP}{\mathbb{P}} %
\newcommand{\QQ}{\mathbb{Q}} %
\newcommand{\RR}{\mathbb{R}} %
\newcommand{\ZZ}{\mathbb{Z}} %
\newcommand*{\alphab}{\bm{\alpha}}
\newcommand*{\Sigmab}{\bm{\Sigma}}
\newcommand*{\thetab}{{\bm{\theta}}}
\newcommand*{\epsilonb}{{\bm{\varepsilon}}}
\newcommand*{\etab}{\bm{\eta}}
\newcommand*{\xib}{\bm{\xi}}
\newcommand*{\mub}{\bm{\mu}}
\newcommand*{\Phib}{\bm{\Phi}}
\newcommand*{\phib}{{\bm{\phi}}}
\crefname{figure}{Fig.}{Figs.}
\crefname{algorithm}{Alg.}{Algs.}
\crefname{definition}{Defn.}{Defns.}
\crefname{corollary}{Cor.}{Cors.}
\crefname{proposition}{Prop.}{Props.}
\crefname{theorem}{Thm.}{Thms.}
\crefname{remark}{Remark}{Remarks}
\crefname{principle}{Principle}{Principles}
\crefname{lemma}{Lemma}{Lemmata}
\crefname{claim}{Claim}{Claims}
\crefname{table}{Tab.}{Tabs.}
\crefname{section}{\S}{\S\S}
\crefname{subsection}{\S}{\S\S}
\crefname{subsubsection}{\S}{\S\S}
\crefname{assumption}{Asm.}{Asms.}
\crefname{appendix}{Appx.}{Appx.}
\crefname{equation}{Eq.}{Eqs.}
\crefname{example}{Ex.}{Exs.}
\newcommand{\dimPhi}{d_\Phi}
\newcommand{\BlackBox}{\rule{1.5ex}{1.5ex}}  %
\def\QED{~\rule[-1pt]{5pt}{5pt}\par\medskip}
\newenvironment{proof}{\par\noindent{\bf Proof\ }}{\hfill\BlackBox\\[2mm]}
\theoremstyle{plain} %
\newtheorem{theorem}{Theorem}
\numberwithin{theorem}{section}
\newtheorem{corollary}[theorem]{Corollary}
\newtheorem{lemma}[theorem]{Lemma}
\theoremstyle{definition} %
\theoremstyle{remark} %
\newtheorem{remark}[theorem]{Remark}
\newtheorem{example}[theorem]{Example}
\newtheorem{lemma}[theorem]{Lemma}
\DeclareMathOperator*{\argmin}{arg\,min}
\newcommand\abs[1]{\left|#1\right|}
  \newcommand{\norm}[1]{\left\lVert#1\right\rVert}
\newcommand{\Zbh}{\widehat{\mathbf{Z}}}
\newcommand{\Xbh}{\widehat{\mathbf{X}}}
\newcommand{\zbh}{\widehat{\mathbf{z}}}
\newcommand{\xbh}{\widehat{\mathbf{x}}}
\newcommand{\enc}{\gb}
\newcommand{\dec}{\fb}
\newcommand{\ench}{\widehat{\enc}}
\newcommand{\dech}{\widehat{\dec}}
\newcommand{\Wbh}{\widehat{\Wb}}
\newcommand{\PPh}{\widehat{\PP}}
\newcommand{\CRPS}{\operatorname{CRPS}}
\newcommand{\ES}{\operatorname{ES}}
\newcommand{\MMD}{\operatorname{MMD}}
\newcommand{\ED}{\operatorname{ED}}
\newcommand{\pert}{\mathrm{pert}}
\newcommand{\base}{\mathrm{base}}
\newcommand{\iidsim}{\overset{\mathrm{iid}}{\sim}}
\newcommand{\dimZ}{{d_Z}}
\newcommand{\dimX}{{d_X}} %
\newcommand{\dimeps}{{d_\epsilon}}
\newcommand{\fbt}{\widetilde{\fb}}
\newcommand{\dect}{\fbt}
\newcommand{\Zbt}{\widetilde{\Zb}}
\newcommand{\Wbt}{\widetilde{\Wb}}
\newcommand{\Cbt}{\widetilde{\Cb}}
\newcommand{\cbt}{\widetilde{\cbb}}
\newcommand{\PPt}{\widetilde{\PP}}
\newcommand{\pt}{\widetilde{p}}
\newcommand{\rank}{\mathrm{rank}}
\newcommand{\te}{\mathrm{test}}
  \newcommand{\pluseq}{\mathrel{+}=}
\newcommand{\changelinkcolor}[1]{\hypersetup{linkcolor=#1}}  
\newcommand\jonas[1]{{{\color{blue} Jonas: #1}}}
\newcommand\rmj[1]{{\color{gray}\tiny Rm(J): #1}}
\newcommand{\jvk}[1]{\textcolor{Green}{{#1}}}
\newcommand\xinwei[1]{{{\color{cyan} Xinwei: #1}}}
\renewcommand\jonas[1]{}
\renewcommand\rmj[1]{}
\renewcommand{\jvk}[1]{}
\renewcommand\xinwei[1]{}
\title{Perturbation Modeling with Additive Latent Shifts:\\ Identifiability and Extrapolation Guarantees}
\title{Perturbation Extrapolation via Additive Latent Shifts}
\title{Perturbation Extrapolation Guarantees\\ Under the Additive Latent Shift Assumption}
\title{Extrapolation Guarantees for Perturbation Modeling\\ Under the Additive Latent Shift Assumption}
\newcommand\blfootnote[1]{%
  \begingroup
\renewcommand\thefootnote{}\footnote{#1}%
  \addtocounter{footnote}{-1}%
  \endgroup
}
\author[1]{Julius von K{\"u}gelgen}
\author[2]{Jakob Ketterer}
\author[1]{Michael Vollenweider}
\author[3]{\authorcr Michael Scholkemper}
\author[2,4]{Xinwei Shen}
\author[2]{Nicolai Meinshausen}
\author[1]{Jonas Peters}
\affil[1]{Seminar for Statistics, ETH Zurich, Switzerland}
\affil[2]{ETH Zurich, Switzerland}
\affil[3]{DZNE, Bonn, Germany}
\affil[4]{Department of Statistics, University of Washington, Seattle, USA}
\begin{document}
\doparttoc
\faketableofcontents%
\maketitle
\blfootnote{\looseness-1 Correspondence to: \href{mailto:vjulius@ethz.ch}{\text{vjulius@ethz.ch}}. JK, XS, and NM are not with ETH anymore, but mostly contributed while at ETH.}
\vspace{-3.em}
\begin{abstract}
\vspace{-0.5em}
We  consider the problem of modeling the effects of perturbations like gene knockouts on measurements such as single-cell RNA counts. 
Given data for some perturbations, we aim to predict the distribution of measurements for new combinations of perturbations.
To address this challenging extrapolation task, we posit that perturbations act additively in a suitable, unknown embedding space. 
We formulate the data-generating process as a latent variable model, in which perturbations amount to mean shifts in latent space and can be combined additively.
We then prove that, given sufficiently diverse training perturbations, the representation and perturbation effects are identifiable up to orthogonal transformation and use this to derive extrapolation guarantees for unseen perturbations that can be expressed as linear combinations of seen ones.
To estimate the model from data, we propose the perturbation distribution autoencoder (PDAE), which is trained by maximizing the distributional similarity between true and simulated perturbation distributions. 
The trained model can then be used to predict  previously unseen perturbation distributions.
In support of our theoretical results, we demonstrate through simulations that PDAE can accurately predict the effects of unseen but identifiable perturbations, and showcase the method on combinatorial gene perturbation data.
\end{abstract}
\vspace{-0.25em}

\section{Introduction}

\looseness-1
Due to technological progress, large-scale perturbation data is becoming increasingly abundant across several scientific fields.
However, the number of possible combinations of perturbations is often exponentially large, rendering exhaustive experimentation infeasible. Observations are thus typically only available for a subset of perturbations of interest.
To identify promising candidates and thereby inform future data collection, we need 
models capable of extrapolating to unseen perturbations.

\textbf{Motivating example.}
In single-cell biology, advances in gene editing and sequencing~\citep{wang2009rna,jinek2012programmable,dixit2016perturb} %
have enabled the collection of vast %
databases for various gene~\citep{norman2019exploring,wessels2023efficient} and drug~\citep{zhang2025tahoe} perturbations.
These data sets contain expression levels of several thousands of genes under selected single and double gene knockouts or drug dosages across a large number of cells. 
Several studies advocate for the use of machine learning to model such biological perturbations~\citep{lopez2018deep,lopez2023learning,hugi2025perturbation}, hoping to generalize to
unseen combinations like new multi-gene knockdowns or dosage combinations~\citep{lotfollahi2023predicting,bereket2023modeling,he2025morph},
new cell types~\citep{lotfollahi2019scgen,bunne2023learning,adduri2025predicting}, or
entirely new perturbations based on prior knowledge~\citep{yu2022perturbnet,hetzel2022predicting,qi2024predicting,roohani2024predicting,kamimoto2023dissecting,he2025morph}.

\looseness-1
\looseness-1
A common theme in prior work is the use of (nonlinear) representation learning techniques such as autoencoders~\citep{rumelhart1986learning,hinton2006reducing,kingma2013auto} %
to embed observations in a lower-dimensional latent space, in which the effects of perturbations may 
take on a simpler form.
However, most existing approaches are purely empirical and lack 
theoretical underpinning; 
their capabilities and limitations %
thus remain poorly understood. 
To build the next generation of perturbation models,
it is crucial to gain a better understanding of the underlying principles and  suitable assumptions for the fundamentally difficult task of extrapolation.

\begin{figure}[t]
\vspace{-1.75em}
    \centering
    \begin{subfigure}[b]{0.55\textwidth}
    \centering    \includegraphics[width=\textwidth]{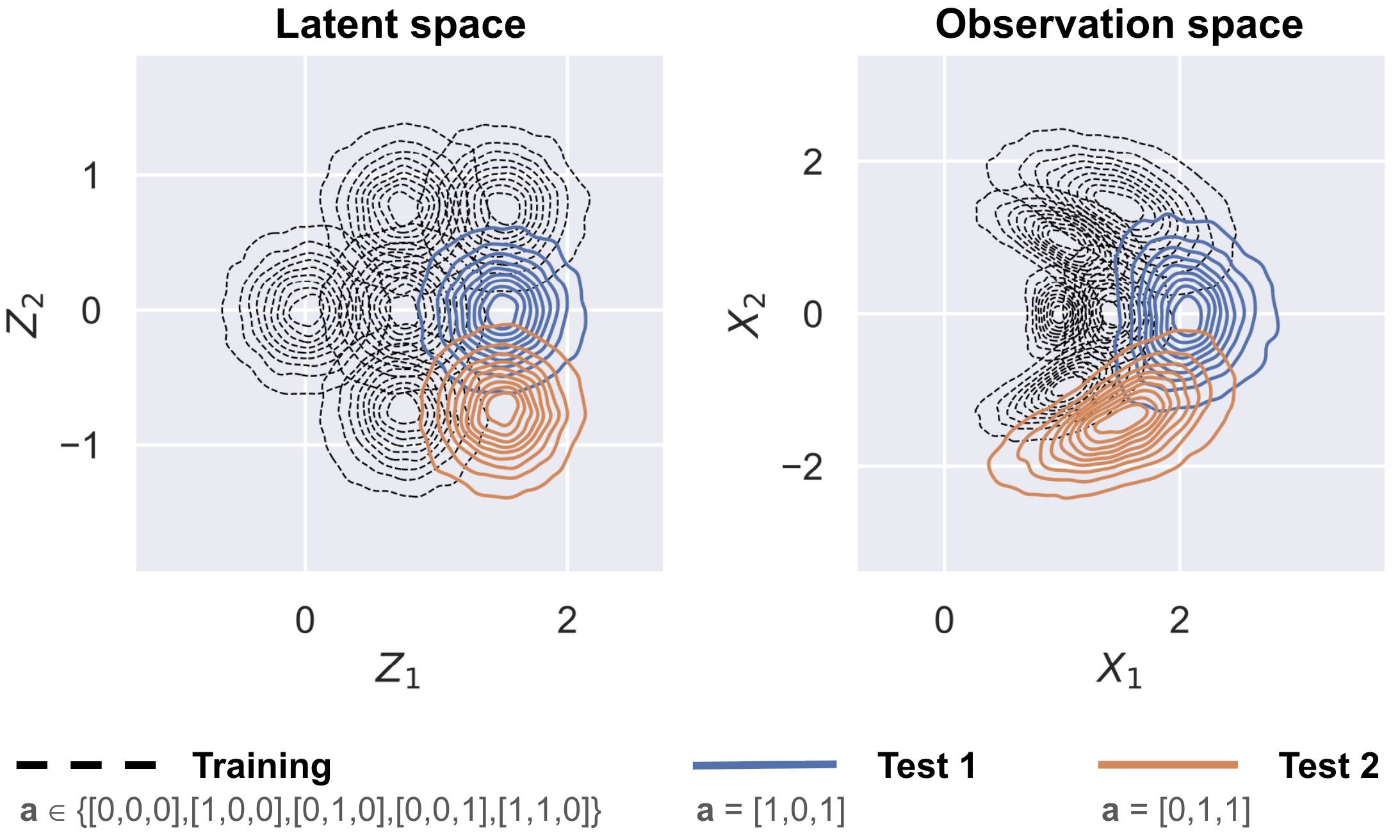}
    \caption{Simulated multi-domain perturbation data}
    \label{subfig:simulated_data}
    \end{subfigure}\hfill
    \begin{subfigure}[b]{0.4\textwidth}
    \centering    
    \begin{tikzpicture}[scale=0.25]
    \centering
    \newcommand{\xshift}{-5.em}
    \newcommand{\yshift}{-4.em}
    \newcommand{\nodewidth}{2.5em}
        \node (Z_base) [thick, latent, minimum width = \nodewidth] {\small $\Zb^\base_{e,i}$};
        \node (Z_pert) [thick, latent, yshift = \yshift, minimum width = \nodewidth] {\small $\Zb^\pert_{e,i}$};
        \node (Y) [thick, obs, xshift=\xshift, minimum width = \nodewidth] {\small $\ab_e$};
        \node (eps) [thick, latent, xshift=-\xshift, yshift=\yshift, minimum width = \nodewidth] {\small $\epsilonb_{e,i}$};
        \node (X) [thick, obs, yshift=2*\yshift, minimum width = \nodewidth] {\small $\Xb_{e,i}$};
        \edge[-stealth, thick]{Z_base, Y}{Z_pert};
        \edge[-stealth, thick]{Z_pert, eps}{X};
        \tikzset{plate caption/.style={caption, node distance=0, inner sep=-10pt, below left=-5pt and 5pt of #1.south east, text depth=0.2em}}
        \plate [inner sep=0.5em]{i} {(Z_base) (Z_pert) (eps) (X)} {$N_e$};
       \tikzset{plate caption/.style={caption, node distance=0, inner sep=-1pt, below left=-5pt and -25pt of #1.south west,text depth=0.2em}}
        \plate[inner sep=1em] {e} {(Y) (Z_base) (Z_pert) (eps) (X)} {$M+1$}; 
    \end{tikzpicture}
    \caption{Graphical model representation
    }
    \label{subfig:graphical_model}
    \end{subfigure}
    \caption{
    \small 
    \looseness-1 
    \textbf{Task description and example following the assumed data generating process.}
    \textbf{(a)}~During training, we are given {$M\!=\!5$} training
    data sets in observation space (right, contour plots in grey), each of which is generated under a known combination of {$K\!=\!3$} elementary  
    perturbations. The corresponding (training) perturbation labels~$\ab_e$ are shown below the plots. At test time, we are given a new perturbation label and the task is to predict the corresponding distribution in observation space (right, blue and orange).
    We tackle this task by assuming that the effect of perturbations is linear additive in a suitable latent space (left).
    Both plots show kernel density estimates of the distributions.
    \textbf{(b)}~For each experiment or environment $e\in[M]_0$, the corresponding dataset comprises a perturbation label~$\ab_e$ and $N_e$ observations~$\bm{x}_{ei}$. Perturbations are assumed to act as mean-shifts on a latent basal state, $\zb^\text{pert}_{ei}=\zb^\text{base}_{ei}+\Wb\ab_e$ for an unknown perturbation matrix $\Wb$. An unknown nonlinear (stochastic) decoder with noise~$\bm\varepsilon_{ei}$ then yields the observed $\xb_{ei}=\dec(\zb^\pert_{ei},\bm\varepsilon_{ei})$. 
    In the example in (a), $\dec$ is constant in its second argument.
    Shaded and white nodes indicate observed and unobserved/latent variables, respectively.}
    \label{fig:data_generating_process}
    \vspace{-0.5em}
\end{figure}

\textbf{Overview and main contributions.} 
 \looseness-1 
In this work, we present a theoretically-grounded perspective on extrapolation to unseen perturbations. 
Given the often unpaired nature of the available non-i.i.d.\ data %
(e.g., each cell can only be measured once since measurements are destructive), %
we consider the task of predicting population-level perturbation effects, which we formalize as a distributional regression problem~(\cref{sec:problem_setting}).
We then postulate a generative model~(\cref{sec:model}) in which, similar to prior works~\citep{lotfollahi2023predicting,bereket2023modeling}, perturbations act as additive mean shifts in a suitable latent space~(\cref{fig:data_generating_process}).

For the case of a deterministic decoder, we analyze this model class theoretically~(\cref{sec:theory}), 
proving
that, under mild assumptions, the latent representation and the relative training perturbation effects are identifiable up to orthogonal transformation if the available  environments are sufficiently diverse~(\cref{thm:affine_identifiability_gaussian}).
This result implies
extrapolation guarantees~(\cref{thm:extrapolation}) for unseen perturbations that can be expressed as linear combinations of training perturbations~(\cref{fig:span}).

For estimation, we propose %
the perturbation distribution autoencoder (PDAE; \cref{fig:architecture}), which 
uses the energy score~\citep{gneiting2007strictly} to maximize distributional similarity between simulated and ground-truth perturbation data~(\cref{sec:method}). 
Through controlled simulations~(\cref{sec:experiments}), we verify that, contrary to baselines, PDAE can accurately capture unseen identifiable perturbation distributions~(\cref{fig:qualitative_results_main}).
In a case study
on combinatorial gene perturbation data~\citep{norman2019exploring,wessels2023efficient}~(\cref{sec:real_world_case_study}), we find PDAE to perform better than prior deep learning approaches and on par with its linear counterpart~(\cref{fig:results_norman}), consistent with recent findings~\citep{ahlmann2025deep}.

\textbf{Notation.}
\looseness-1 
We write scalars as $a$, column-vectors as $\ab$, and matrices as $\Ab$. 
We use uppercase for random variables and lowercase for their realizations.
The pushforward of a distribution $\PP$ by a measurable function $f$ is denoted by~$f_\#\PP$. 
The Euclidean ($\ell_2$) norm is denoted by $\norm{\cdot}$ and the set of orthonormal $d\times d$ matrices by $O(d)$.
For $n\in\NN_{\geq 1}$, we write $[n]:=\{1, ..., n\}$ and $[n]_0:=[n]\cup\{0\}$.

\section{Problem setting: Distributional perturbation extrapolation}
\label{sec:problem_setting}
\textbf{Observables.}
Let $\xb\in\RR^{\dimX}$ be 
an \textit{observation}
(e.g., omics data)
that 
is
obtained under 
one
of several possible
experimental conditions, and let $\Xb$ be the corresponding random vector. %
We model these conditions as combinations of $K$ elementary
perturbations, each of which 
we assume 
can be encoded by a real number. 
Further, let $\ab\in\RR^K$ be 
a vector of 
\textit{perturbation labels%
} that indicate%
s
if, or how much of, each perturbation was applied before collecting~$\xb$.

\textbf{Data.} We have access to $M+1$ experimental datasets $\Dcal_0, \Dcal_1, ..., \Dcal_M$, each comprising a sample of $N_e$~observations~$\xb$ and a perturbation label $\ab$, i.e., for all experiments or environments~$e\in %
[M]_0$,
\begin{equation}
\label{eq:data}
    \Dcal_e = \big((\xb_{e,i})_{i=1}^{N_e}, \ab_e\big).
\end{equation}
\looseness-1 For $e\in[M]_0$, we assume $\xb_{e,1}, ..., \xb_{e,N_e}$ are i.i.d.\ realizations from an underlying distribution~$\PP_{\Xb|\ab_e}$.

\textbf{Task.} 
\looseness-1 
We aim
to predict the effect of new perturbations ${\ab_\te\not\in\{\ab_0,\ab_1, ..., \ab_M\}}$ 
without observing any data from this condition (%
``zero-shot''). 
In particular, we are interested in the distribution over observations resulting from $\ab_\te$. That is, 
we aim to leverage the training domains~\eqref{eq:data} to learn a map 
\begin{equation}
\label{eq:distributional_regression_formulation}
\ab\mapsto \PP_{\Xb|\ab},
\end{equation}
\looseness-1
which extrapolates beyond the training support;
predictions 
should
remain reliable for new~$\ab_\te$.
\begin{example}%
\label{ex:gene_perturbations}
\looseness-1 
Gene knockouts can be represented by (sparse) binary ${\ab\in\{0,1\}^K}$, where $K$ is the number of potential targets and $a_k=1$ iff.\ target~$k$ is knocked out. 
We observe control cells from ${\ab_0\!=\!(0,0,0)^\top}$ and three single-gene knockouts $\ab_1\!=\!(1,0,0)^\top$, $\ab_2\!=\!(0,1,0)^\top$, and $\ab_3\!=\!(0,0,1)^\top$, and want to predict the distribution under double knockouts $\ab^1_{\text{test}}=(1,1,0)^\top$ and $\ab^2_{\text{test}}=(1,0,1)^\top$.

\end{example}
\textbf{Distributional vs mean prediction.} 
Since~\eqref{eq:distributional_regression_formulation} targets the full conditional distribution---rather than, say, the conditional mean $\EE[\Xb|\ab]$---it constitutes a (multi-variate) distributional regression task~\citep{koenker1978regression}, a.k.a.\ probabilistic forecasting~\citep{gneiting2007strictly} or conditional generative modeling~\citep{sohn2015learning}.
Therefore, we refer to the problem setting addressed in this paper 
as \textit{distributional} perturbation extrapolation. 

\looseness-1 
\textbf{On extrapolation.}
Formally, extrapolation means that the 
value  of the function in~\eqref{eq:distributional_regression_formulation} at 
$\ab_\te$ 
is determined by its values
on the training support $\{\ab_0,\ab_1, ..., \ab_M\}$.
Intuitively, for this to be feasible without imposing very restrictive assumptions on the form of the mapping in~\eqref{eq:distributional_regression_formulation}, $\ab_\te$ should be somehow related to the training perturbations $\ab_e$. %
For example, given data resulting from individual perturbations, predict the effects of combinations thereof.
This type of extrapolation to new combinations of inputs is also called compositional generalization~\citep{lake2017building,goyal2022inductive}.
It is known to be challenging~\citep{schott2021visual,montero2021the,montero2022lost} and requires assumptions that sufficiently constrain the model class%
~\citep{wiedemer2024provable,wiedemer2024compositional,lachapelle2024additive,brady2023provably,brady2025interaction,dong2022first,lippl2024does}.

\section{Model: Perturbations as additive mean shifts in latent space}
\label{sec:model}
We now specify a generative process for the observed data in~\eqref{eq:data}. 
In so doing, we aim to strike a balance between imposing sufficient structure on~\eqref{eq:distributional_regression_formulation} to facilitate extrapolation, while remaining flexible enough to model the complicated, nonlinear effects which perturbations may have on the distribution of observations. 
\looseness-1 
Similar to other approaches
\citep[e.g.,][]{lotfollahi2023predicting,bereket2023modeling}, we model the effect of perturbations as mean shifts in a latent space with \text{perturbation-relevant latent variables} $\zb\in\RR^{\dimZ}$, 
which are related to the observations $\xb$ via a nonlinear (stochastic) \textit{mixing function} or \textit{generator}~$\dec$.
The full generative process amounts to a hierarchical latent variable model, 
which additionally contains \textit{noise variables}~$\epsilonb$ that capture other variation underlying the observations~$\xb$, 
and which is represented as a graphical model in~\cref{subfig:graphical_model}. 
 Specifically, we posit for all $e\in[M]_0\cup \{\te\}$ and all $i\in [N_e]$:
\begin{align}
    \Zb^\base_{e,i}\sim \PP_\Zb,
    \qquad 
    \Zb^\pert_{e,i}:=\Zb^\base_{e,i}+\Wb\ab_e,
    \label{eq:perturbation_model}
    \qquad 
    \epsilonb_{e,i} \sim \QQ_\epsilonb,
    \qquad
    \Xb_{e,i}:=\dec\big(\Zb^\pert_{e,i}, \epsilonb_{e,i}\big),
\end{align}
where $(\Zb^\base_{e,i})_{e\in[M]_0, i\in [N_e]}$ are %
i.i.d.\ according to $\PP_\Zb$, and $(\epsilonb_{e,i})_{e\in[M]_0, i\in [N_e]}$ are i.i.d.\ according to~$\QQ_\epsilonb$ and jointly independent of $(\Zb^\base_{e,i})_{e\in[M]_0, i\in [N_e]}$.

\looseness-1 
The \textit{basal state} $\Zb^\base$ %
describes the unperturbed state of latent variables, which can, in principle, be affected by perturbations, and is distributed according to a base distribution~$\PP_\Zb$. 
The \textit{perturbation matrix} $\Wb\in\RR^{\dimZ\times K}$ 
captures the effect of the $K$ elementary perturbations encoded in $\ab$ on the latents and turns $\Zb^\base$ into  \textit{perturbed latents} $\Zb^\pert$.
Since the same perturbation~$\ab_e$ is applied for all $i\in[N_e]$, all within-dataset variability in $\Zb^\pert$ is due to $\PP_\Zb$.

The \text{noise variables} $\epsilonb\in\RR^{\dimeps}$
capture all other variation in the observed data that is unaffected by perturbations.
W.l.o.g., we assume that it is distributed according to a fixed, uninformative distribution $\QQ_\epsilonb$ such as a standard isotropic Gaussian.
The noise serves as an additional input to the \text{(stochastic) mixing function} or generator
$\dec:\RR^\dimZ\times \RR^\dimeps\to\RR^\dimX$%
, which produces observations for the perturbed latent.
This implicit
generative model can capture any conditional distribution $\PP_{\Xb|\Zb=\zb}$~\citep[e.g.,][Prop.~7.1]{peters2017elements} and
is more flexible than relying on parametric assumptions, such as a Gaussian~\citep{lotfollahi2023predicting} or negative Binomial~\citep{bereket2023modeling,lopez2018deep,lopez2023learning} 
likelihood parametrised by a deterministic decoder.

For a given $e$ and $\ab_e$, the generative process 
in~\eqref{eq:perturbation_model} %
induces a distribution %
 over observations~$\xb$, which we denote by $\PP_{e}$ or $\PP_{\Xb|\ab_e}$, %
defined as the push-forward of $\PP_\Zb$ and $\QQ_\epsilonb$ through~\eqref{eq:perturbation_model} %
such that %
\begin{equation}
\label{eq:data_distributions}
    \forall e\in[M]_0: \qquad 
    (\Xb_{e,i})_{i \in [N_e]}\overset{\text{i.i.d.}}{\sim}\PP_{e}\,.
\end{equation}
\section{Theoretical results}
\label{sec:theory}
We now present our theoretical analysis for the model class from~\cref{sec:model}. %
Here, we assume that the decoder is deterministic, so that we can write ${\dec:\RR^\dimZ\to\RR^\dimX}$, see~\cref{remark:deterministic_vs_noisy_mixing} for further discussion.

\subsection{Identifiability}
\looseness-1
We first study
identifiability, i.e., 
under which assumptions and up to which ambiguities certain parts of the postulated generative process can be provably recovered 
given access to the full distributions.
As established by the following results, our model class is identifiable up to orthogonal transformation, provided that the training perturbations are sufficiently diverse, the dimension of the latent space is known, and 
some additional technical assumptions hold.
All proofs are provided in~\cref{app:proofs}.
\begin{restatable}[Identifiability up to orthogonal transformation%
]{theorem}{idgaussian}
\label{thm:affine_identifiability_gaussian}
For $M\in \ZZ_{\geq 0}$, let $\ab_0, ..., \ab_M\in\RR^K$ be ${M+1}$ perturbation labels.
Let $\dec,\dect:\RR^{\dimZ}\to\RR^\dimX$, 
$\Wb,\Wbt\in\RR^{\dimZ\times K}$, and $\PP,\PPt$ be distributions on~$\RR^{\dimZ}$ such that the models $(\dec,\Wb,\PP)$ and $(\dect,\Wbt,\PPt)$ induce the same observed distributions, i.e.,
\begin{equation}
\label{eq:same_observed_distributions}
    \forall e\in[M]_0:%
    \qquad 
    \dec\left(\Zb+\Wb\ab_e\right)
    \overset{d}{=}
    \dect\Big(\Zbt+\Wbt\ab_e\Big),
\end{equation}
with $\Zb\sim\PP$ and $\Zbt\sim\PPt$ independent. Assume further that: 
\begin{enumerate}[label=(\roman*),leftmargin=1.6em,itemsep=-0.25em,topsep=-0.25em]
    \item \textbf{[invertibility]} $\dec$ and $\dect$ are $C^2$-diffeomorphisms onto their respective images;
    \item \looseness-1 \textbf{[Gaussianity]} $\Zb$ and $\Zbt$ are standard isotropic Gaussians with means chosen s.t.\ the latent distributions in domain $e=0$ are centered at the origin:
    $\PP=\Ncal\left(-\Wb\ab_0,\Ib\right)$ and $\PPt=\Ncal(-\Wbt\ab_0,\Ib)$;
     \item \textbf{[sufficient diversity]} the matrix $\Wbt\Ab \in\RR^{\dimZ \times M}$, where %
     $\Ab\in\RR^{K\times M}$ is the matrix with columns $(\ab_1-\ab_0)$, ..., $(\ab_M-\ab_0)$, has full row rank, i.e., $\rank(\Wbt\Ab)=\dimZ$, or the same holds for $\Wb\Ab$.
\end{enumerate}
\looseness-1 Then the representation and the effects of observed perturbations relative to~$\ab_0$ (captured by $\Wb\Ab$) are identifiable up to orthogonal transformation, i.e.,
there is an orthogonal matrix $\Ob\in O(\dimZ)$ such that
\begin{align}
\label{eq:identifiability_condition_f}
    \forall \zb\in\RR^\dimZ:\qquad \dect^{-1} \circ\dec (\zb)&=\Ob\zb, 
    \qquad \text{and}\qquad %
    \Wbt\Ab =\Ob \Wb\Ab.
\end{align}
\end{restatable}
\begin{restatable}
{corollary}{corollaryidgaussian}
\label{cor:id_gaussian}
If, in addition to the assumptions of~\cref{thm:affine_identifiability_gaussian},     $\Ab\in\RR^{K\times M}$ has full row rank (i.e., $\rank(\Ab)=K\leq M$),
then the perturbation matrix $\Wb$ is identifiable up to orthogonal transformation, i.e.,
    $\Wbt=\Ob\Wb$,
where $\Ob\in O(\dimZ)$ is an orthogonal matrix.
\end{restatable}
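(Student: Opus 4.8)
The plan is to bootstrap directly from the conclusion of \cref{thm:affine_identifiability_gaussian} using only elementary linear algebra. Since all assumptions of the theorem are in force, I immediately obtain an orthogonal matrix $\Ob\in O(\dimZ)$ satisfying \eqref{eq:identifiability_condition_f} and, crucially for this corollary, the relation $\Wbt\Ab=\Ob\Wb\Ab$ from \eqref{eq:identifiability_condition_W}. Equivalently, $(\Wbt-\Ob\Wb)\Ab=\mathbf{0}$. The whole point is then to ``cancel'' the matrix $\Ab$ from the right, which is exactly what the extra rank assumption $\rank(\Ab)=K\leq M$ buys us: a matrix with full row rank admits a right inverse.

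Concretely, I would argue as follows. Because $\Ab\in\RR^{K\times M}$ has full row rank $K$, the Gram matrix $\Ab\Ab^\top\in\RR^{K\times K}$ is invertible, so the matrix $\Ab^{+}:=\Ab^\top(\Ab\Ab^\top)^{-1}\in\RR^{M\times K}$ is well defined and satisfies $\Ab\Ab^{+}=\Ib_K$. Right-multiplying $(\Wbt-\Ob\Wb)\Ab=\mathbf{0}$ by $\Ab^{+}$ then yields $\Wbt-\Ob\Wb=(\Wbt-\Ob\Wb)\Ab\Ab^{+}=\mathbf{0}$, i.e.\ $\Wbt=\Ob\Wb$ with the very same orthogonal $\Ob$ supplied by the theorem. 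This completes the argument.

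There is essentially no genuine obstacle here; the corollary is a one-line consequence of \eqref{eq:identifiability_condition_W}. The only points worth a sentence of care are: (i) noting explicitly that the $\Ob$ in the corollary's statement is inherited from the theorem (so that \eqref{eq:identifiability_condition_f} continues to hold simultaneously, and the representation identifiability is not lost); and (ii) observing that the hypotheses are mutually consistent --- the theorem already requires $\Wb\Ab$ or $\Wbt\Ab$ to have full row rank $\dimZ$, which together with $\rank(\Ab)=K$ forces $K\geq\dimZ$, so the regime $\dimZ\leq K\leq M$ is the natural one in which both the theorem and this corollary apply. I would also remark that full row rank of $\Ab$ means the $M$ shifted labels $\ab_1-\ab_0,\dots,\ab_M-\ab_0$ span all of $\RR^K$, which is what makes every individual column of $\Wb$ (hence the full effect of each elementary perturbation, not just their effects along the observed combinations) recoverable.
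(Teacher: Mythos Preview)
Your proposal is correct and matches the paper's proof essentially line for line: both invoke \eqref{eq:identifiability_condition_W} from \cref{thm:affine_identifiability_gaussian} and then right-multiply by a right inverse of $\Ab$ (the paper just asserts existence of some $\Kb$ with $\Ab\Kb=\Ib_K$, while you write down the explicit Moore--Penrose form $\Ab^\top(\Ab\Ab^\top)^{-1}$). Your additional remarks on the inherited $\Ob$ and the consistency of the rank assumptions are sound but go beyond what the paper records.
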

\begin{remark}[Choice of base distribution]
\label{remark:choice_of_base_distribution}
\looseness-1
    The choice of means and covariances in assumption~\textit{(ii)} of~\cref{thm:affine_identifiability_gaussian} serves to eliminate inherent ambiguities of the model class due to overparametrisation. As shown by~\cref{lemma:mean_covariance_ambiguity_no_intercept}, this comes w.l.o.g., in the sense that any model with $\PP=\Ncal(\mub,\Sigmab)$ can be reparametrised to take the form in assumption~\textit{(ii)}.
\end{remark}
\begin{remark}[Sufficient diversity]
\label{remark:sufficient_diversity}
\looseness-1 
The matrix product $\Wb\Ab\in\RR^{\dimZ\times M}$ captures the relative effects of the observed perturbations since $(\Wb\Ab)_{je}$ corresponds to the shift in the $j$\textsuperscript{th} latent $Z_j$ resulting from~$\ab_e$, relative to a reference condition $\ab_0$.
Moreover, assumption~\textit{(iii)} of~\cref{thm:affine_identifiability_gaussian} implies 
\begin{equation}
\label{eq:lower_bound_on_ranks}
\min\big\{\rank\big(\Wbt\big), \rank(\Ab)\big\}\geq \rank(\Wbt\Ab)=\dimZ.
\end{equation}
\looseness-1 Hence, sufficient diversity requires at least $\dimZ$ elementary perturbations whose associated shift vectors $\wb_k\in\RR^{d_Z}$ are linearly independent, and we must observe at least $\dimZ$ perturbation conditions~$\ab_e$ other than $\ab_0$ such that the relative perturbation vectors $(\ab_e-\ab_0)\in\RR^K$ are linearly independent.
\end{remark}
\begin{remark}[Choice of reference]
\label{remark:choice_of_reference}
    The choice of reference environment is arbitrary. 
    Here, we choose $e=0$ as reference without loss of generality.
    Intuitively, if a perturbation is always present (e.g., $a_{e,1}=1$ for all $e$), then its effects cannot be discerned from the basal state. 
    Therefore, only the effects of the relative perturbations $(\ab_e-\ab_0)$ can be recovered. 
    In practice, we often have access to an unperturbed, purely observational control condition with~$\ab_0=\bm 0$.
\end{remark}
\begin{remark}[Deterministic vs noisy mixing.]
\label{remark:deterministic_vs_noisy_mixing}
The mixing function~$\dec$ in~\cref{thm:affine_identifiability_gaussian} is assumed deterministic, i.e., does not take noise~$\epsilonb$ as input, cf.~\eqref{eq:perturbation_model}.
In principle, 
this does not pose a restriction since 
noise can be appended to $\zb^\pert$ as additional dimensions that are not influenced by perturbations. However, this increases~$\dimZ$ and makes it harder to satisfy sufficient diversity (see~\cref{remark:sufficient_diversity}).
Alternatively, the additive noise setting, $\Xb=\dec(\Zb)+\epsilonb$, can be reduced to the noiseless case~\citep{khemakhem2020variational}.
\end{remark}
\subsection{From identifiability to extrapolation}
\looseness-1 
Since we aim to make distributional predictions for new perturbations~$\ab_\te$, identifiability is only
of intermediary interest.
The following result highlights the usefulness of identifiability up to orthogonal transformation established in~\cref{thm:affine_identifiability_gaussian} for extrapolation.
It allows us to uniquely predict the observable effects of certain unseen perturbations---those which can be expressed as linear combinations of the observed perturbations, as illustrated in~\cref{fig:span}.

\begin{minipage}{0.6\textwidth}
\begin{restatable}[Extrapolation to span of relative perturbations]{theorem}{extrapolation}
\label{thm:extrapolation}
Under the same setting and assumptions as in~\cref{thm:affine_identifiability_gaussian}, let ${\ab_\te\in\RR^K}$ be an unseen perturbation~s.t.
\begin{equation}
    \label{eq:span}
    \left(\ab_\te-\ab_0\right)
    \in 
    \mathrm{span}
    \big(
    \left\{
    \ab_e-\ab_0
    \right\}_{e\in[M]}
    \big).
\end{equation}
Then the effect of $\ab_\te$ is uniquely identifiable in that
\begin{equation}
\label{eq:equal_test_prediction}
\Xb_\te=\dec(\Zb+\Wb\ab_\te)\overset{d}{=}\dect\big(\Zbt+\Wbt\ab_\te\big)
\end{equation}
\end{restatable}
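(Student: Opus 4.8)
The plan is to reduce everything to the two conclusions of \cref{thm:affine_identifiability_gaussian}, namely that there is an orthogonal $\Ob\in O(\dimZ)$ with $\dect^{-1}\circ\dec(\zb)=\Ob\zb$ for all $\zb\in\RR^\dimZ$ (equivalently $\dec=\dect\circ\Ob$) and $\Wbt\Ab=\Ob\Wb\Ab$, and then exploit the rotational invariance of the standard isotropic Gaussian. First I would use the span hypothesis~\eqref{eq:span}: since $\Ab$ has columns $(\ab_e-\ab_0)$ for $e\in[M]$, condition~\eqref{eq:span} means there exists $\vb\in\RR^M$ with $\ab_\te-\ab_0=\Ab\vb$. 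Hence $\Wb\ab_\te=\Wb\ab_0+\Wb\Ab\vb$ and, crucially, $\Wbt\ab_\te=\Wbt\ab_0+\Wbt\Ab\vb=\Wbt\ab_0+\Ob\Wb\Ab\vb$, where the last equality is exactly~\eqref{eq:identifiability_condition_W}. So although $\Wb$ and $\Wbt$ themselves need not be related by $\Ob$, their action on $\ab_\te-\ab_0$ is.

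Next I would absorb the reference shift into the base variables. By assumption~(ii) of \cref{thm:affine_identifiability_gaussian}, $\Zb\sim\Ncal(-\Wb\ab_0,\Ib)$, so $\Gb:=\Zb+\Wb\ab_0\sim\Ncal(\bm 0,\Ib)$; likewise $\Gbt:=\Zbt+\Wbt\ab_0\sim\Ncal(\bm 0,\Ib)$. Then
\begin{equation}
\dec(\Zb+\Wb\ab_\te)=\dec(\Gb+\Wb\Ab\vb)=\dect\big(\Ob\Gb+\Ob\Wb\Ab\vb\big),
\end{equation}
using $\dec=\dect\circ\Ob$ and linearity of $\Ob$. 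Since $\Ob$ is orthogonal, $\Ob\Gb\sim\Ncal(\bm 0,\Ob\Ob^\top)=\Ncal(\bm 0,\Ib)$, so $\Ob\Gb\overset{d}{=}\Gbt$, and therefore $\dect(\Ob\Gb+\Ob\Wb\Ab\vb)\overset{d}{=}\dect(\Gbt+\Ob\Wb\Ab\vb)$. Finally, substituting back $\Ob\Wb\Ab\vb=\Wbt\Ab\vb=\Wbt\ab_\te-\Wbt\ab_0$ and $\Gbt=\Zbt+\Wbt\ab_0$ gives $\dect(\Gbt+\Ob\Wb\Ab\vb)=\dect(\Zbt+\Wbt\ab_\te)$, which is~\eqref{eq:equal_test_prediction}.

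I do not expect a serious obstacle here: the theorem is essentially a corollary of the identifiability result once one observes that membership in the span is precisely what is needed to route the unseen shift through the matrix $\Wbt\Ab$ (rather than through $\Wbt$ alone), and that the standard Gaussian is invariant under $\Ob$. The only point requiring minor care is bookkeeping of the reference term $\ab_0$ — making sure the substitution $\Zb\mapsto\Gb=\Zb+\Wb\ab_0$ is consistent with the centering convention in assumption~(ii) so that both $\Gb$ and $\Gbt$ are genuinely standard Gaussian; this is exactly why the centering in~(ii) was imposed, and one could alternatively invoke \cref{remark:choice_of_base_distribution} / \cref{lemma:mean_covariance_ambiguity_no_intercept} if a different parametrisation of $\PP$ were used. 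A remark worth adding is that, when $\Ab$ additionally has full row rank (\cref{cor:id_gaussian}), the span in~\eqref{eq:span} is all of $\RR^K$, so the conclusion holds for every $\ab_\te$.
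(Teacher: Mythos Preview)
Your proposal is correct and takes essentially the same approach as the paper: both use the span hypothesis to write $\ab_\te-\ab_0=\Ab\vb$, invoke the two conclusions of \cref{thm:affine_identifiability_gaussian} ($\dec=\dect\circ\Ob$ and $\Wbt\Ab=\Ob\Wb\Ab$), and exploit the rotational invariance of the isotropic Gaussian to match the two sides. The only cosmetic difference is that the paper passes to latent space via $\hb=\dect^{-1}\circ\dec$ and compares the means of the two resulting Gaussians explicitly, whereas you stay in observation space and introduce the centered auxiliaries $\Zb+\Wb\ab_0$ and $\Zbt+\Wbt\ab_0$; the underlying argument is identical.
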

\end{minipage}\hfill
\begin{minipage}{0.35\textwidth}
\centering
\includegraphics[width=\columnwidth, trim={0.4em 0.5em 0.5em 2.85em}, clip]{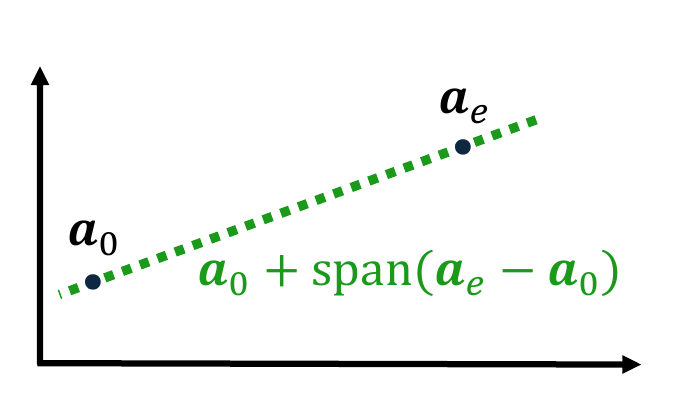}
\captionof{figure}{\small Illustration of %
Eq.~\eqref{eq:span}.}
\label{fig:span}
\end{minipage}
\vspace{0.5em}
\begin{remark}[Additive vs general shifts]
    \looseness-1 
    For identifiability, 
    it is not necessary that the mean shifts take the additive form $\Wb\ab_e$. %
    If we replace $\Wb\ab_e$ and $\Wbt\ab_e$ in~\eqref{eq:same_observed_distributions} with arbitrary shift vectors ${\cbb_e,\cbt_e\in\RR^{\dimZ}}$, the same result as~\cref{thm:affine_identifiability_gaussian} can be shown to hold with $\Wb\Ab$ and $\Wbt\Ab$ replaced by $\Cb$ and $\Cbt$, defined as the matrices with columns $(\cbb_e-\cbb_0)$ and $(\cbt_e-\cbt_0)$, respectively. That is, the relative shift vectors are identifiable (up to orthogonal transformation), regardless of whether they are linear in~$\ab$.
    This is relevant for including nonlinear interactions or learnable element-wise nonlinear dose-response functions $h_j(a_j)$~\citep[][]{lotfollahi2023predicting}.
    However, linearity in $\ab$ is leveraged to prove extrapolation, %
    where~\eqref{eq:span} establishes a link between $\ab_\te$ and the training perturbations.
    Since only $\ab_\te$ is observed at test time, 
    \cref{thm:extrapolation}
    cannot easily be extended to %
    arbitrary shifts $\cbb_\te$, as this would require establishing a link between $\cbb_\te$ and the training shifts, all of which are unobserved.%
\end{remark}

\begin{example}[Identifiability without extrapolation]
\label{ex:id_without_extrapolation}
\looseness-1 
Let $\dimZ\!=\!2$ and consider the perturbation labels from~\cref{ex:gene_perturbations} with $\Wb=(\wb_1,\wb_2,\wb_3)$, where $\wb_1\!=\!(1,0)^\top$, $\wb_2\!=\!(0,1)^\top$, and ${\wb_3\!=\!(1,1)^\top}$. If only $\{\ab_0,\ab_1,\ab_2\}$ are available for training, then $\Wb\Ab=\Ib$ and  sufficient diversity holds, but we still cannot extrapolate to $\ab_3=(0,0,1)^\top\not\in\mathrm{span}(\{\ab_2-\ab_0,\ab_1-\ab_0\})=\{(y,z,0)^\top:y,z\in\RR\}$. 
\end{example}
\subsection{Incorporating prior knowledge and extrapolation to completely unseen perturbations}
\looseness-1 
Encoding perturbations as sparse binary vectors as in~\cref{ex:gene_perturbations} makes it impossible to extrapolate to completely unseen perturbations via~\cref{thm:extrapolation} since~\eqref{eq:span}, by definition, does not hold in this case, as illustrated in~\cref{ex:id_without_extrapolation}.
To make use of prior knowledge about similarities among perturbations, suppose that, in addition to the data from~\eqref{eq:data}, we have access to an embedding $\phib_k\in\RR^{\dimPhi}$ for each elementary perturbation $k\in [K]$.
Such perturbation embeddings can come, e.g., from existing literature, alternative data sources, or pre-trained foundation models.
In the context of genetic perturbations, for example, \citet{he2025morph} explore different types of gene embeddings (though without the latent shift assumption and with a learned nonlinear pertubation encoder applied to the  embeddings).

\looseness-1 If the embeddings can be combined additively as $\Phib\ab_e$ with $\Phib=(\phib_1, ..., \phib_{K})$ and shifts are linear in the joint embeddings (i.e., $\Zb^\pert=\Zb^\base+\Wb\Phib\ab_e$ for $\Wb\in\RR^{\dimZ\times \dimPhi}$),
then \cref{thm:extrapolation} directly applies with~$\ab_e$ replaced by~$\Phib\ab_e$.\footnote{Importantly, sufficient diversity~(\cref{thm:affine_identifiability_gaussian}~\textit{(iii)}) can only hold if $\dimPhi\geq\dimZ$, cf.~\cref{remark:sufficient_diversity}.}
Crucially, this relaxes~\eqref{eq:span} to the following weaker condition 
\begin{equation*}
\Phib\left(\ab_\te-\ab_0\right)
    \in 
    \mathrm{span}
    \big(
    \left\{
    \Phib\left(\ab_e-\ab_0\right)
    \right\}_{e\in[M]}
    \big)    
\end{equation*}
which is implied by but does not imply~\eqref{eq:span}.
Intuitively, prior knowledge can facilitate extrapolation to completely unseen perturbations by relating them to seen ones in the embedding space.
\begin{example}[Extrapolation with prior-knowledge-based perturbation embeddings]
For the setting from~\cref{ex:id_without_extrapolation}, let ${\phib_1=(1,1,1,1)^\top}$, ${\phib_2=(1,-1,-1,1)^\top}$, and ${\phib_3=(1.2,0.8,0.8,1.2)^\top}$. Then extrapolation to $\ab_3$ becomes possible since ${\Phib\ab_3=\phib_3\in\mathrm{span}(\{\Phib\ab_1,\Phib\ab_2\})=\mathrm{span}(\{\phib_1,\phib_2\})}$. 
\end{example}

\section{Estimation method: Perturbation distribution autoencoder (PDAE)}
\label{sec:method}
\looseness-1
To leverage the extrapolation guarantees of~\cref{thm:extrapolation}, we seek to estimate
the identifiable 
 parts of the generative process 
  by matching the distributions of the observed
 data in~\eqref{eq:data} across all perturbation conditions, as prescribed by condition~\eqref{eq:same_observed_distributions} in~\cref{thm:affine_identifiability_gaussian}.
To this end,
we take inspiration from prior distributional learning approaches%
~\citep{bouchacourt2016disco,shen2024distributional} and
adapt them for multi-domain perturbation modeling.

\looseness-1 
Our method, the \textit{perturbation distribution autoencoder}~(PDAE), %
comprises an encoder%
, a perturbation matrix%
, and a (stochastic) decoder, %
which are trained to maximise the similarity between pairs of true and simulated perturbation distributions, see~\cref{fig:architecture} for an overview.
 Intuitively, instead of specifying a (Gaussian, as per~\cref{thm:affine_identifiability_gaussian}) prior on the  basal state from which to sample latents as in~\eqref{eq:perturbation_model}, we use encoded observations from a source domain as perturbed latents and transform them to perturbed latents from a target domain using the learnt perturbation matrix and the assumed additivity in latent space.

\begin{figure}[t]
    \centering
    \includegraphics[width=\textwidth]{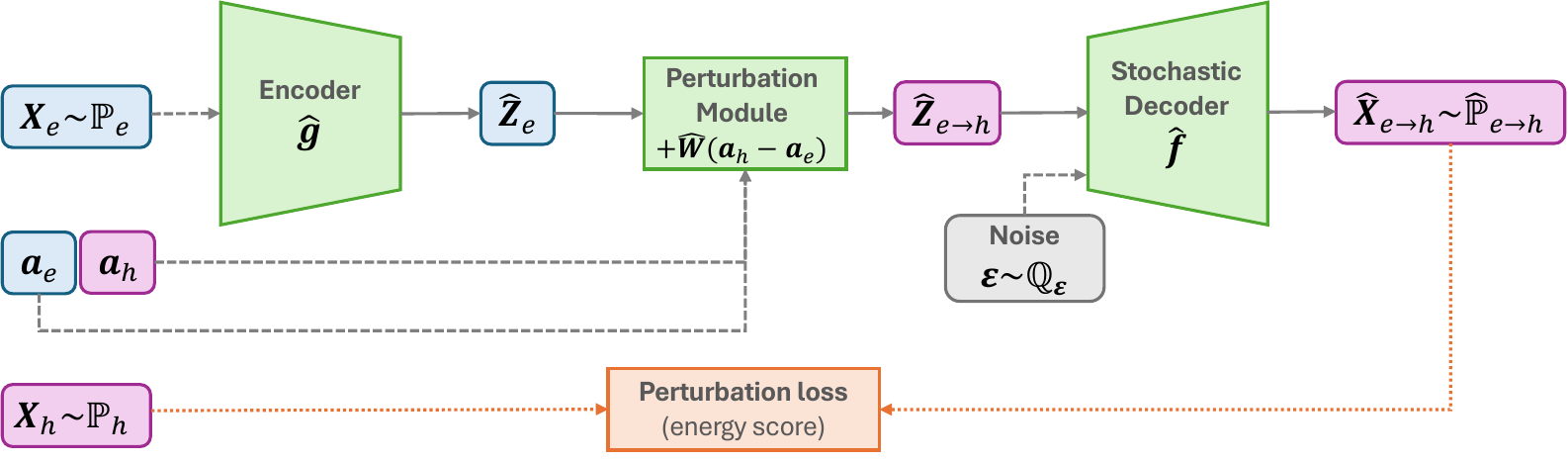}
    \caption{\small
    \looseness-1 
    \textbf{Perturbation distribution autoencoder~(PDAE).}
    PDAE simulates the distribution of a target perturbation condition $h$ \textit{(purple)} by encoding, perturbing, and (stochastically) decoding data from a source condition~$e$ \textit{(blue)}. 
    Dashed arrows indicate model inputs and green boxes model components with trainable parameters. The learning objective \textit{(orange, dotted arrows)} amounts to
    a perturbation loss \textit{(bottom)}, measuring, for all pairs of training domains $(e,h)$, the dissimilarity between the %
     true distribution of~$\Xb_h$
    and its simulated version %
    based on domain $e$.
    At test time, the target perturbation label $\ab_h$ can be replaced with an unseen~$\ab_\te$ to make predictions.}
    \label{fig:architecture}
\end{figure}

\textbf{Encoder.} 
\looseness-1 
The %
encoder
$\ench:\RR^{\dimX}\to\RR^{\dimZ}$ maps observations $\xb$ to the space of perturbation-relevant latents $\zb$. 
Ideally, it should invert the stochastic mixing function $f$ in~\eqref{eq:perturbation_model} in the sense of recovering the perturbed latent state~$\zb^\pert_{e,i}$%
from  observation~$\xb_{e,i}$%
. %
We therefore denote the encoder outputs by
    $\zbh^\pert_{e,i} := \ench(\xb_{e,i})$
and refer to them as \textit{estimated perturbed latents}. 

\textbf{Perturbation module.}
If the encoder recovers the perturbed latents up to orthogonal transformation, the additivity of perturbation effects assumed in~\eqref{eq:perturbation_model} allows us to map between the latent distributions underlying different perturbation conditions.
Specifically, we use a perturbation model parametrised by a perturbation matrix $\Wbh \in \RR^{\dimZ\times K}$ to create
\textit{synthetic perturbed latents} from domain $h$ as:
\begin{align}
    \label{eq:perturbed_latents}
    \zbh^\pert_{e\to h,i}:=
    \zbh^\pert_{e,i} + \Wbh(\ab_h-\ab_e).
\end{align}
\looseness-1 
which amounts to undoing the effect of %
$\ab_e$ (mapping back to the basal state) and then simulating%
~$\ab_h$.

\textbf{Decoder.}
The decoder $\dech:\RR^{\dimZ}\times \RR^\dimeps\to\RR^{\dimX}$ maps estimated perturbed latents $\zbh^\pert$ and noise $\epsilonb\sim\QQ_\epsilonb$ to observations. 
When evaluated on synthetic perturbed latents from~\eqref{eq:perturbed_latents}, we refer to the corresponding (random) outputs
\begin{equation}
\label{eq:synthetic_observations}
\Xbh_{e\to h,i}=\dech\big(\zbh^\pert_{e\to h,i}, \,\epsilonb%
\big) \qquad \text{where} \qquad \epsilonb%
\sim\QQ_\epsilonb
\end{equation}
\looseness-1
as \textit{synthetic observations} from domain $h$ based on domain~$e$.

\textbf{Simulated perturbation distributions.} 
Given a distribution~$\PP_e$ and perturbation label~$\ab_e$, our model facilitates sampling synthetic observations for another perturbation condition with label~$\ab_h$ via~\eqref{eq:perturbed_latents} and~\eqref{eq:synthetic_observations}.
We denote the resulting distribution by $\PPh_{e\to h}$, formally defined as the distribution of
\begin{equation}
\label{eq:induced_synthetic_distribution}
\dech\big(\ench(\Xb_e)+\Wbh(\ab_h-\ab_e),\epsilonb\big),
 \qquad \text{where} \qquad  \Xb_e\sim\PP_e \qquad \text{and} \qquad 
 \epsilonb\sim\QQ_\epsilonb.
\end{equation}
Intuitively, when simulating 
$\PP_h$ based on $\PP_e$ via~\eqref{eq:induced_synthetic_distribution}, the random variable ${\ench(\Xb_e)-\Wbh\ab_e}$ in the first argument of~$\dech$ plays the role of a random latent basal state from~\eqref{eq:perturbation_model}.

\textbf{Population-level learning objective.}
\looseness-1 
We propose minimizing a sum of pairwise distribution losses, %
\begin{equation}
    \label{eq:combined_loss}
    \argmin_{\ench,\dech,\Wbh} 
    \sum_{e,h\in[M]_0}
    d\left(\PPh_{e\to h}, \PP_h\right)
    \quad \text{with} \quad 
    d\big(\PPh_{e\to h}, \PP_h
    \big)=
    -\EE_{\Xb_h\sim\PP_h}
    \big[\ES_\beta
    \big(\PPh_{e\to h}, \Xb_h
    \big)
    \big]
\end{equation}
\looseness-1 
where $\PPh_{e\to h}$ depends on $(\PP_e,\ab_e,\ab_h)$ and the model parameters via~\eqref{eq:induced_synthetic_distribution}; and $d$ measures distributional dissimilarity %
through the negative expected energy score,
with $\ES_\beta$ \citep{gneiting2007strictly} defined as
\begin{equation*}
\textstyle
    \ES_\beta(\PP,\xb)=\frac{1}{2}\EE_{\Xb,\Xb'\,\overset{\text{i.i.d.}}{\sim}\,\PP} \norm{\Xb-\Xb'}^\beta-\EE_{\Xb\sim\PP} \norm{\Xb-\xb}^\beta.
\end{equation*}%
\looseness-1 For $\beta\in(0,2)$, $\ES_\beta$ is a strictly proper scoring rule, meaning that the expected energy score $\EE_\Xb[\ES_\beta(\PP,\Xb)]$ is maximized if and only if $\Xb\sim\PP$, see~\cref{sec:forecasting} for details. %
Combined with its computational simplicity, this property makes the negative expected energy score a popular loss function for distributional regression~\citep{bouchacourt2016disco,shen2024engression,shen2024distributional,de2025distributional,shen2025reverse}.
It also directly implies the following corollary.%
\begin{corollary}
The objective in~\eqref{eq:combined_loss} is minimized if and only if $\PP_h=\PPh_{e\to h}$ for all $e,h\in[M]_0$.%
\end{corollary}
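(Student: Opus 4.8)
The plan is to read the corollary off the strict propriety of the energy score, applied termwise. The excerpt already records the key fact: for $\beta\in(0,2)$, $\ES_\beta$ is strictly proper, so that $\EE_{\Xb\sim\QQ}[\ES_\beta(\PP,\Xb)]$ is maximized over $\PP$ exactly at $\PP=\QQ$. Rephrased through the definition in~\eqref{eq:negative_expected_energy_score_as_distance}, this states that for any target law $\QQ$ on $\RR^{\dimX}$ with finite $\beta$-th moment and any candidate $\PP$,
\[
 d(\PP,\QQ)\;\ge\;d(\QQ,\QQ),\qquad\text{with equality iff }\PP=\QQ .
\]
Hence $d(\cdot,\QQ)$ has a unique minimizer, namely $\QQ$ itself.

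First I would fix an arbitrary model $(\ench,\dech,\Wbh)$, which determines the whole family $\{\PPh_{e\to h}\}_{e,h\in[M]_0}$ via~\eqref{eq:induced_synthetic_distribution}. Applying the displayed inequality with $\QQ=\PP_h$ and $\PP=\PPh_{e\to h}$ for each ordered pair $(e,h)\in[M]_0\times[M]_0$ and summing yields
\[
 \sum_{e,h\in[M]_0} d\bigl(\PPh_{e\to h},\PP_h\bigr)\;\ge\;\sum_{e,h\in[M]_0} d(\PP_h,\PP_h)\;=:\;c,
\]
where $c$ depends only on the data distributions $\PP_0,\dots,\PP_M$ and not on the model parameters. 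Since the objective equals $c$ plus a sum of nonnegative gaps, it equals $c$ if and only if every gap vanishes, i.e.\ if and only if $\PPh_{e\to h}=\PP_h$ for all $e,h\in[M]_0$. Note that this conclusion is unaffected by the fact that all $\PPh_{e\to h}$ are functions of the same parameters: only the termwise bound and the principle that a sum of nonnegative terms vanishes iff each term does are used.

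It then remains to observe that the bound $c$ is actually attained, so that ``minimized'' is non-vacuous. In the paper's setting the observed $\PP_h$ are generated by a model of the form~\eqref{eq:perturbation_model}--\eqref{eq:stochastic_mixing}; in particular, under the deterministic, diffeomorphic decoder of \cref{thm:affine_identifiability_gaussian}, taking $\ench=\dec^{-1}$, $\dech(\cdot,\epsilonb)=\dec(\cdot)$, and $\Wbh=\Wb$ gives $\dec^{-1}(\Xb_e)=\Zb^\base+\Wb\ab_e$, so that $\PPh_{e\to h}$ is the distribution of $\dec\bigl(\Zb^\base+\Wb\ab_e+\Wb(\ab_h-\ab_e)\bigr)=\dec(\Zb^\base+\Wb\ab_h)$, i.e.\ $\PPh_{e\to h}=\PP_h$ for all $e,h$; this model attains the value $c$. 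Therefore $\min=c$, and a model $(\ench,\dech,\Wbh)$ minimizes the objective if and only if $\PPh_{e\to h}=\PP_h$ for all $e,h\in[M]_0$, as claimed. I do not expect any genuine obstacle: the only points requiring care are the moment/integrability conditions that make $\ES_\beta$ and $c$ finite, and invoking strict propriety in the ``maximized iff'' form quoted in the excerpt; otherwise the corollary is essentially a direct translation of that property.
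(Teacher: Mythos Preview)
Your proposal is correct and follows the same route as the paper, which simply notes that the corollary is a direct consequence of strict propriety of the energy score; you supply the termwise bound-and-sum argument and the attainability of the lower bound explicitly, whereas the paper leaves these implicit. The attainability step you include (via the true decoder/encoder/perturbation matrix) is a worthwhile addition, since without it the ``only if'' direction would not be fully justified.
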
%
\textbf{Training.}
Since, in practice, we do not have access to the true distributions, %
we approximate  expectations w.r.t.\ $\PP_e$ and $\PP_h$ in~\eqref{eq:combined_loss} %
with Monte Carlo estimates based on mini-batches of~\eqref{eq:data}, and use a single %
 draw of $\epsilonb$ to approximate expectations w.r.t.~$\QQ_\epsilonb$.
We then
optimise~\eqref{eq:combined_loss} w.r.t.\ $\Wbh$ and the parameters of $\ench$ and~$\dech$ using stochastic gradient descent~\citep{robbins1951stochastic,kingma2014adam}, see~\cref{alg:training} in~\cref{app:algorithm}.

\textbf{Prediction.}
To sample from the predicted distribution for a new perturbation label $\ab_\te$, we use our model to compute the synthetic perturbed test latents $\zbh^\pert_{e \to\te,i}$ for all $e\in[M]_0$ and all $i\in[N_e]$ via~\eqref{eq:perturbed_latents}, and then sample %
synthetic test observations $\Xbh_{e\to\te, i}$ according to~\eqref{eq:synthetic_observations}.
Our estimate of $\PP_\te$ is a convex combination of the domain-specific marginal perturbation distributions $\PPh_{e\to\te}$ induced by our model for $\ab_\te$, i.e., the empirical version of
    $\PPh_\te = 
    \sum_{e\in[M]_0}
    \omega_e
    \PPh_{e\to\te}$,
\looseness-1 
with weights $\omega_e\geq 0$ such that $\sum_e\omega_e=1$ to account for prior knowledge or differing sample sizes, e.g., uniform weighting ($\omega_e\!=\!\frac{1}{M+1}$) or using only control observations ($w_e=0$ for $e\neq0$).

\textbf{Full algorithm and extensions.}
Due to space constraints, we present only the core PDAE architecture and learning objective here;  optional additional loss components %
(a reconstruction objective, latent prior, and sparsity regularization) and a complete algorithm
are detailed in~\cref{app:algorithm}.

\section{Related work}
\textbf{Identifiable representation learning for extrapolation.}
\citet{saengkyongam2024identifying} consider a similar identifiability problem  to the one we study in~\cref{sec:theory} and leverage affine identifiability to derive out-of-support extrapolation guarantees. However, they focus on continuous perturbations~$\ab$ and require the training support to contain a non-empty open subset of $\mathbb{R}^K$. As a result, their model, proof strategy, and estimation method (based on maximum moment restrictions) differ from ours. 

\textbf{Causal approaches.}
\looseness-1 
\citet{schneidergenerative} model perturbations as interventions on the measurements.
\citet{zhang2024identifiability} take a causal representation learning~(CRL)~\citep{scholkopf2021toward} approach and model perturbations as shift interventions in a latent causal model. %
They use a discrepancy-based VAE, which, similar to PDAE, is trained primarily to maximize the similarity between real and simulated perturbation distributions~\citep{liu2025learning}. 
In~\cref{app:relation_to_SCMs}, we show that our model can also be interpreted as interventional CRL~\citep{von2023nonparametric,varici2025score} with a linear latent causal model~\citep{squires2023linear,buchholz2024learning} and generalized shift interventions~(\cref{prop:sems_special_case}).

\textbf{Synthetic interventions.}
The type of extrapolation to linear combinations in~\cref{thm:extrapolation} 
resembles similar conditions for synthetic interventions~(SI)~\citep{agarwal2020synthetic,agarwal2023synthetic,squires2022causal,jiralerspong2024general}.
In contrast to our nonlinear setting, the SI literature relies on linear factor models and methods for matrix or tensor completion.
Moreover, SI methods generally aim to impute unit-level potential outcomes 
whereas we target the distribution-level.
A notable exception is the framing of~\citet{squires2022causal}, which is more in line with our population-level viewpoint, but only identifies the mean of the distribution. 

\textbf{Testing for and modeling interactions.}
\looseness-1 
\citet{xu2024automated} propose tests for the hypthesis of no (nonlinear) interactions in latent space, based on the assumption that perturbations affect disjoint latents and given suitable data from both individual and pairs of perturbations.
\citet{adduri2025predicting} and \citet{he2025morph} propose modeling interactions via attention in transformer-based architectures~\citep{vaswani2017attention}. 

We discuss further related work on modeling biological perturbations in~\cref{sec:CPA}.

\begin{figure}
    \centering
    \begin{subfigure}[b]{0.37\columnwidth}
        \includegraphics[width=\columnwidth,trim={0.4em 0 0 0}, clip]{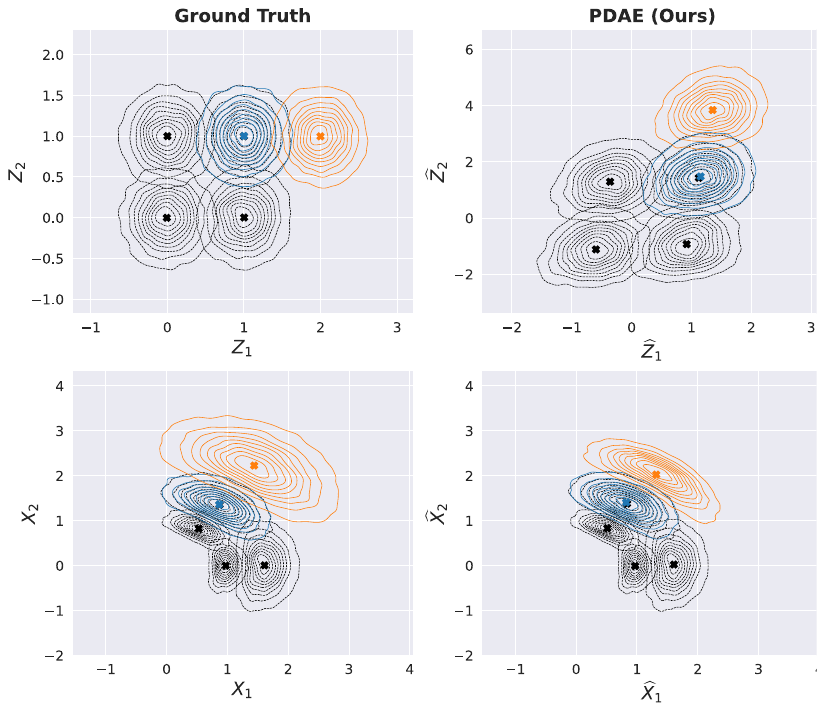}
        \caption{Qualitative results}
        \label{subfig:qualitative_results_main}
    \end{subfigure}\hfill
\begin{subfigure}[b]{0.62\columnwidth}
    \includegraphics[width=\columnwidth]{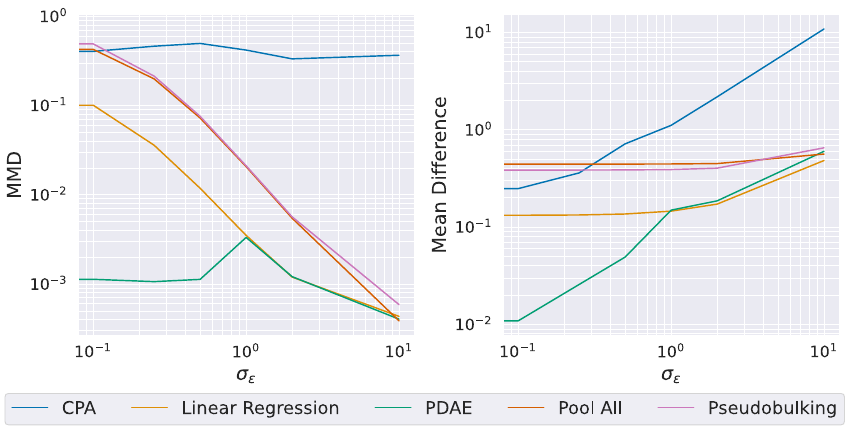}
    \caption{Varying signal-to-noise ratio}
    \label{subfig:noise_comparison}
\end{subfigure}
    \caption{\small \looseness-1
    \textbf{Results on synthetic data.}
    \textbf{(a)}~%
    Kernel density estimates of the training data \textit{(black)}, an in-distribution (ID) test case \textit{(blue)} 
     which was not seen but happens to induce the same perturbed latent distribution as one of the training perturbations, 
     and an out-of-distribution (OOD) test case \textit{(orange)}.
    PDAE recovers an affine transformation---since condition \textit{(ii)} of~\cref{thm:affine_identifiability_gaussian} is not enforced, cf.~\cref{app:reparametrizability}---of the true latents \textit{(top row)} leading to
    accurate distributional predictions for the training and ID test domains \textit{(bottom row)}. 
    In the OOD case, the test distribution is identified in latent space  but lies mostly outside the training support \textit{(orange, top right)}. 
    As a result, the decoder output does not fully match the true OOD distribution \textit{(orange, bottom row)}.
    \textbf{(b)}~PDAE performs best for small noise variance $\sigma_\epsilon^2$ and comparable to its linear version for intermediate noise. For large noise, distributions become indistinguishable based on MMD \textit{(left)} and mean prediction \textit{(right)} deteriorates to baseline level.
    }
    \label{fig:qualitative_results_main}
\end{figure}

\section{Experimental validation}
We investigate the empirical finite-sample behaviour of PDAE in a controlled setting with synthetic data~(\cref{sec:experiments}) and then showcase it through a case study on combinatorial gene perturbation data~(\cref{sec:real_world_case_study}).

\subsection{Simulation study}
\label{sec:experiments}

\textbf{Data.}
\looseness-1 
For ease of visualization, we use $\dimZ\!=2$-dim.\ latents.
We consider $K\!=\!3$ elementary perturbations and generate training and test domains using the perturbation labels from~\cref{ex:gene_perturbations} with shift vectors $\wb_1\!=\!(1,0)^\top$, $\wb_2\!=\!(0,1)^\top$, ${\wb_3\!=\!(1,1)^\top}$ and base distribution $\PP_\Zb=\Ncal(\bm 0,0.25^2\Ib)$.
In this case, the sufficient diversity condition~\textit{(iii)} of~\cref{thm:affine_identifiability_gaussian} is satisfied.
We generate observations as ${\xb\!=\![\dec(\zb); \epsilonb]}$ where $\dec(\zb)\!=\!\mathrm{e}^{z_1}(\cos z_2,\sin z_2)$ and $\epsilonb\sim\Ncal(\bm 0, \sigma^2_\epsilon\Ib)$ is concatenated noise.

\textbf{Setup.}
\looseness-1 
We compare PDAE with CPA~\citep{lotfollahi2023predicting} and baselines therein which pool either all observations or observations from perturbations involved in the combination (``Pseudobulk'').
We also propose a new baseline, which learns additive shifts directly in observation space by linearly regressing the domain-specific means $\mub^{\xb}_e\in\RR^\dimX$ on $\ab_e$ and using the resulting model to predict $\mub_\te^{\xb}$ from~$\ab_\te$. This amounts to a special case of PDAE with encoder and decoder equal to the identity and $d_Z=d_X$.
We report distributional fit in terms of maximum mean discrepancy~(MMD)~\citep{gretton2012kernel} and the difference between predicted and true mean, $\norm{\mub^\xb-\widehat{\mub}^\xb}$.
For further experimental details, see~\cref{app:experimental_details}.

\textbf{Results.} \Cref{subfig:qualitative_results_main} shows the true vs estimated observed and latent distributions in the noiseless case and \cref{subfig:noise_comparison} shows mean and distributional fit for $\dimeps\!=\!8$ and $\dimX\!=\!10$ across different noise levels, see the caption for additional details.
Additional quantitative results for larger numbers of randomly sampled test cases are summarized in~\cref{tab:results} in~\cref{app:summary_synthetic_data}, where PDAE achieves near perfect mean and distributional fit, clearly outperforming its linear instantiation in second place.

\textbf{Takeaway.} When our assumptions hold, PDAE accurately predicts unseen perturbation distributions, but limited support overlap in latent space and low signal-to-noise ratio can be  challenges in practice.

\subsection{Real-world case study: combinatorial gene perturbations (PerturbSeq)}
\label{sec:real_world_case_study}
\textbf{Data.} We evaluate our method on two combinatorial PerturbSeq datasets spanning different cell types and experimental technologies. Data from~\citet{norman2019exploring} contains a total of $\sim$88.6k K562 leukemia cells across 236 distinct CRISPR activation perturbations (131 doubles).
Data from \citet{wessels2023efficient} contains a total of $\sim$22.8k monocyte immune cells across 185 CRISPR interference perturbations (158 doubles).
For both datasets, each cell contains measurements of $\dimX\approx$ 8.2k genes.

\textbf{Setup.} \looseness-1 We adopt the experimental setup of~\citet{miller2025deep}. Control cells and single perturbations are used for training; doubles are split across train (25\%), validation (25\%), and test (50\%). Each method is trained and tested across 2 folds so that each double perturbation is used exactly once for evaluation.

\textbf{Methods.} 
We compare PDAE with: its linear regression instantiation; using only control (``Control Mean'') or all cells (``Dataset Mean'')~\citep{vinas2025systema}; a fixed additive model (${\mub_{AB}:=\mub_A+\mub_B-\mub_\text{ctrl}}$)~\citep{gaudelet2024season,ahlmann2025deep}; an oracle, that uses half the test cells to predict the other half (``Technical Duplicate'')~\citep{miller2025deep}; and existing deep learning approaches developed for single-cell perturbation modeling---scLambda~\citep{wang2024modeling}, GEARS~\citep{roohani2024predicting}, and PRESAGE~\citep{littman2025gene}. Both GEARS and PRESAGE rely on prior biological knowledge.
Further details on hyperparameter choices and model selection are described in~\cref{app:additional_details_and_results_gene_data}.

\textbf{Metrics.} 
\looseness-1 To meaningfully evaluate predictions in 8k-dim.\ gene expression space, we focus on metrics that are weighted by differential expression level, as argued for in several recent studies~\citep{miller2025deep,mejia2025diversity,vollenweider2026signal}. Specifically, we report the weighted mean squared error (wMSE)~\citep{mejia2025diversity} and the weighted correlation between the true and predicted difference vectors to the dataset mean baseline (wPearson$\Delta$Pert)~\citep{vollenweider2026signal}.

\begin{figure}[tbp]
\vspace{-1.75em}
    \centering
    \includegraphics[height=0.345\linewidth, trim={0 0 0em 5em}, clip]{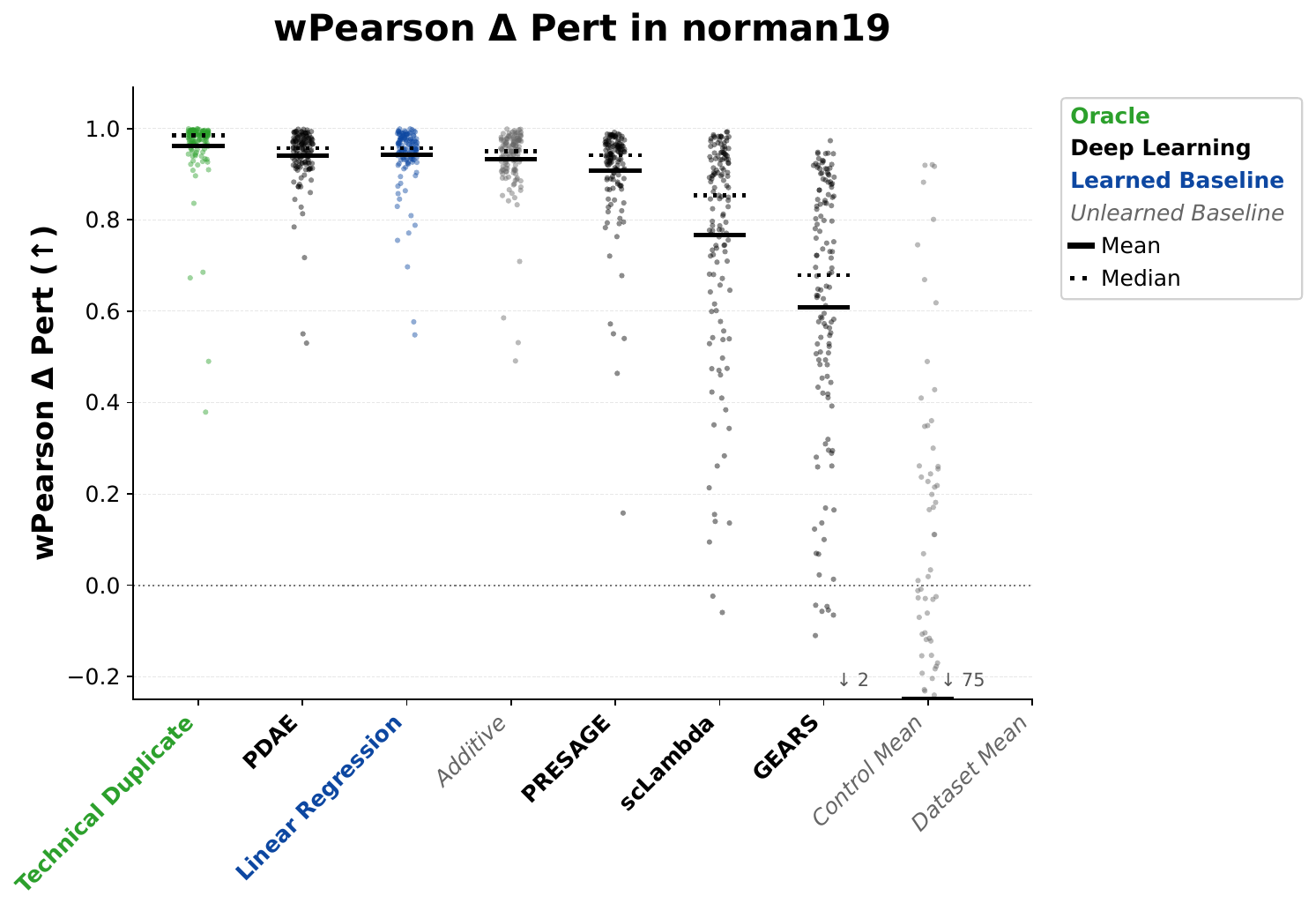}%
    \includegraphics[height=0.345\linewidth, trim={0 0 15em 5em}, clip]{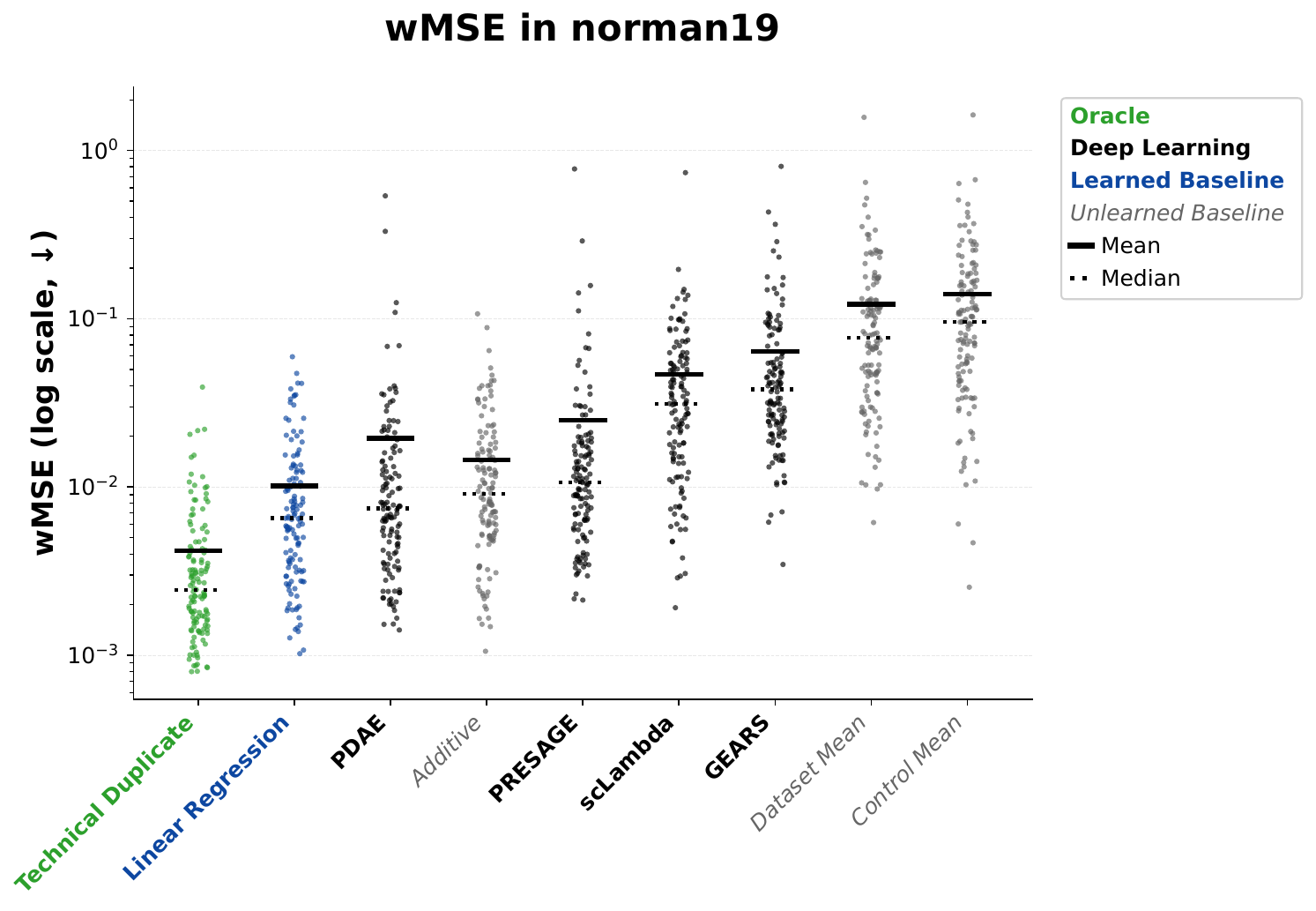}
    \vspace{-0.2em}
    \caption{\small \looseness-1 \textbf{Results for gene perturbation prediction.} Shown are wPearson$\Delta$Pert (left) and wMSE (right) on the CRISPRa combinatorial PerturbSeq data from~\citet{norman2019exploring}. Each dot in a strip-plot corresponds to a different test double perturbation. Due to the large spread, methods are sorted by median.
    PDAE performs better than existing deep learning based methods and baselines and comparable to its linear counterpart.} 
    \label{fig:results_norman}
    \vspace{-0.5em}
\end{figure}

\textbf{Results.} 
\looseness-1 The results on the data from~\citet{norman2019exploring} are shown in~\cref{fig:results_norman}, see the caption for details. 
The corresponding results for the data from~\citet{wessels2023efficient} in~\cref{fig:results_wessels} in~\cref{app:additional_details_and_results_gene_data} show similar trends, albeit with less pronounced differences between methods and a much larger gap to oracle performance.

\textbf{Takeaway.} \looseness-1 
Without extensive engineering, PDAE outperforms prior deep learning methods and baselines, underscoring the challenging nature of extrapolation and the benefit of principled approaches. Still, linear methods remain highly competitive on noisy high-dimensional gene expression data.

\section{Discussion, limitations, and future work}
\label{sec:discussion}
\looseness-1 
\textbf{Interpretation of results.}
\looseness-1 
Our somewhat sobering findings from~\cref{sec:real_world_case_study}---that deep learning models do not outperform additive baselines at combinatorial gene perturbation prediction---are consistent with recent benchmarks~\citep{ahlmann2025deep,vinas2025systema,vollenweider2026signal}.
\cref{subfig:noise_comparison} suggest the low signal-to-noise ratio in these datasets as a possible explanation.
Further gains may still be possible through  careful hyperparameter tuning
and explicitly modeling biological features such as the heavily zero-inflated nature of gene expression data and the limited effectiveness of CRISPR perturbations~\citep{hugi2025perturbation}.
Moreover, cell embeddings produced by representation learning approaches are often of independent interest, and our findings from~\cref{sec:experiments} show that nonlinear approaches can yield substantial improvements, especially in high signal-to-noise regimes.

\textbf{Linear vs additive baseline.}
\looseness-1
Linear regression follows the same additive shift model as PDAE but in observation space. Perhaps surprisingly, this baseline is not typically considered for perturbation prediction. 
The widely used additive baseline~\citep{ahlmann2025deep,gaudelet2024season} is a special case thereof, which is fitted only from single perturbations. 
Since double perturbations available during training provide additional information, future studies should report the more competitive linear regression baseline for fairer comparisons.

\textbf{Necessity of Gaussianity.}
\looseness-1 
Gaussianity of~$\PP_\Zb$ is sufficient for identifiability, but may not be necessary. %
We hypothesize that our results can be generalized to other exponential family distributions.
Indeed, PDAE does not enforce any particular latent distribution and still yields good performance.
In~\cref{app:algorithm}, we discuss how to impose a prior like the one from assumption~\textit{(ii)} of~\cref{thm:affine_identifiability_gaussian} by penalizing deviations of the encoded basal state distributions from the prior, e.g., using the energy score. 

\textbf{Decoder extrapolation.}
\looseness-1 
We have identified the need for the  decoder to generalize to unseen inputs as a separate challenge for distributional perturbation extrapolation. 
Although this issue is absent from our theory, where $\PP_\Zb$
 is Gaussian and has full support, it arises in practice when learning from finite data. 
As discussed in~\cref{fig:qualitative_results_main}, it
is orthogonal to learning the correct representation and perturbation model.
However, it may be possible to detect when perturbed test latents fall outside the training support
through support-overlap measures, outlier detection, or retraining on perturbed predictions. This can, in turn, enable
  uncertainty quantification for the predicted observation-space distributions.

\textbf{Concluding remarks.}
\looseness-1 
While we have focused on biological perturbations as a motivating running example, we emphasize that our theoretical and conceptual contributions are broader in scope.
Our empirical findings suggest theoretically grounded %
models based on the latent additive shift assumption like PDAE as a promising starting point, but further research and engineering is needed to make domain-specific perturbation models that extrapolate reliably a practical reality.
How to model interactions in a principled manner while maintaining extrapolation capabilities remains an open question.

\begin{ack}
    The authors thank Jiaqi Zhang and David M.\ Blei for insightful discussions. JvK is supported by The Branco Weiss Fellowship---Society in Science.
\end{ack}

\setlength{\bibsep}{4pt plus 2pt minus 2pt}
{
\bibliographystyle{abbrvnat}
\bibliography{ref}}

\clearpage
\appendix
\addcontentsline{parttoc}{section}{Appendices}
\part{Appendices}
\changelinkcolor{black}{}
\renewcommand\ptctitle{}
\parttoc
\changelinkcolor{BrickRed}{}

\section{Connection to causal representation learning}
\label{app:relation_to_SCMs}
In causal inference, experimental data resulting from perturbations is modeled via interventions in an underlying causal model.
In structural equation models~\citep[SEMs;][]{pearl2009causality, Bongers2020}, 
interventions modify a subset of assignments that determine each variable from its direct causes and unexplained noise. 
In general, our model for the effect of perturbations in latent space~(\cref{sec:model}) differs from how interventions are treated in SEMs. 
However, as summarized in the following result,
the class of shift interventions in linear SEMs constitutes a special case of our model class.%
\begin{restatable}[%
Modeling shift interventions in SEMs]{proposition}{sems}
    \label{prop:sems_special_case}
    \looseness-1
    Consider a linear 
    SEM,
        $\Zb:=\Bb^\top\Zb+\etab$, with noise~$\etab\sim\PP_{\etab}$
    and $\Bb\in\RR^{\dimZ\times\dimZ}$  a weighted adjacency matrix 
    with spectral radius (largest absolute eigenvalue) $\rho(\Bb)<1$.
    Then, the distribution induced by %
    shift interventions $\ab_e\in\RR^{\dimZ}$,
    \begin{equation}
        \label{eq:SEM_shift_interventions}
        \Zb:=\Bb^\top\Zb+\etab+\ab_e,
    \end{equation}
    is identical to that induced by~\eqref{eq:perturbation_model} with  $\Wb=(\Ib-\Bb^\top)^{-1}$ and basal state distribution $\PP_{\Zb}=(\Ib-\Bb^\top)^{-1}_{\#}\PP_{\etab}$.
\end{restatable}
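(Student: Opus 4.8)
The plan is to compute the reduced form of the linear SEM explicitly, both with and without the shift, and then read off the claimed correspondence: a linear SEM with contractive coefficient matrix has a unique reduced form obtained by a single matrix inversion, and adding a constant $\ab_e$ to the structural equations merely translates that reduced form.

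First, since $\rho(\Bb^\top)=\rho(\Bb)<1$, the value $1$ is not an eigenvalue of $\Bb^\top$, so $\Ib-\Bb^\top$ is invertible (equivalently, the Neumann series $\sum_{k\ge 0}(\Bb^\top)^k$ converges to $(\Ib-\Bb^\top)^{-1}$), and the linear SEM~\eqref{eq:SEM_shift_interventions} is uniquely solvable: for a draw $\etab\sim\PP_{\etab}$, the unique $\Zb\in\RR^{\dimZ}$ with $\Zb=\Bb^\top\Zb+\etab+\ab_e$ is
\[
\Zb=(\Ib-\Bb^\top)^{-1}(\etab+\ab_e)=(\Ib-\Bb^\top)^{-1}\etab+(\Ib-\Bb^\top)^{-1}\ab_e .
\]
Setting $\Wb:=(\Ib-\Bb^\top)^{-1}$ and defining the basal variable $\Zb^\base:=(\Ib-\Bb^\top)^{-1}\etab$, whose law is by construction the pushforward $(\Ib-\Bb^\top)^{-1}_{\#}\PP_{\etab}=:\PP_\Zb$, the display becomes $\Zb=\Zb^\base+\Wb\ab_e$, i.e.\ exactly~\eqref{eq:perturbation_model}. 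Because a shift intervention alters only the structural constant and not the noise law, $\etab$ (hence $\Zb^\base$) has the same distribution across all $e$, so the distribution of $\Zb$ under~\eqref{eq:SEM_shift_interventions} coincides with that under~\eqref{eq:perturbation_model} for the stated $\Wb$ and $\PP_\Zb$; feeding $\Zb$ through the unchanged mixing channel~\eqref{eq:stochastic_mixing} then matches the observed laws $\PP_e$ as well.

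There is no genuinely hard step. The one point deserving a word of care is the well-posedness of~\eqref{eq:SEM_shift_interventions}, i.e.\ that it admits a unique measurable solution map $\etab\mapsto\Zb$; this is the standard fact for linear SEMs whose coefficient matrix has spectral radius below one~\citep{Bongers2020}. The remaining content is the bookkeeping that the pushforward of $\PP_{\etab}$ by the linear map $(\Ib-\Bb^\top)^{-1}$ is precisely the $\PP_\Zb$ named in the statement; one could equivalently argue via characteristic functions, but the direct substitution above is the cleanest route.
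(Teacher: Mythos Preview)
Your proof is correct and follows essentially the same route as the paper: both compute the reduced form by inverting $\Ib-\Bb^\top$ (using $\rho(\Bb)<1$ for invertibility), split the result into $(\Ib-\Bb^\top)^{-1}\etab+(\Ib-\Bb^\top)^{-1}\ab_e$, and identify the two summands with $\Zb^\base\sim\PP_\Zb$ and $\Wb\ab_e$ respectively. Your version is slightly more detailed (Neumann series, explicit remark on well-posedness), but the argument is the same.
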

\begin{remark}[Applicability to cyclic or confounded SEMs]
\cref{prop:sems_special_case} does not require the acyclicity or unconfoundedness (i.e., independence of the noise components) of the SEM.
The spectral radius condition (which holds for acyclic systems but is strictly weaker) ensures stability and invertibility of~$\Ib-\Bb^\top$ \citep{zheng2018dags,brown2025large}. 
\end{remark}
\looseness-1 
Combined with a decoder from~\eqref{eq:perturbation_model}, \cref{prop:sems_special_case}
highlights in which sense our model can be interpreted as interventional causal representation learning~\citep{scholkopf2021toward,von2023nonparametric,varici2025score} with a linear latent causal model~\citep{squires2023linear,buchholz2024learning} and generalized shift interventions~\citep{zhang2024identifiability}.
For $K=\dimZ$ elementary perturbations, the adjacency %
matrix corresponding to an inferred $\Wbh$ is given by~$\widehat{\Bb}=(\Ib-\Wbh^{-1})^\top$.
However, due to the rotation ambiguity, additional assumptions such as sparse (e.g., single node) shifts would be %
needed to recover the causal graph induced by the true~$\Bb$.

\clearpage
\section{Proofs}
\label{app:proofs}

\subsection{Proof of reparametrizability without loss of generality}
\label{app:reparametrizability}
We first state and prove the following lemma, which shows that the choices of means and covariances of the base distribution in assumption~\textit{(ii)} of~\cref{thm:affine_identifiability_gaussian} come without loss of generality (w.l.o.g.).
\begin{lemma}
\label{lemma:mean_covariance_ambiguity_no_intercept}
    Let $\fb$ and $\Wb$ be such that assumptions \textit{(i)}--\textit{(iii)} of~\cref{thm:affine_identifiability_gaussian} are satisfied, and let $\PP=\Ncal(\mub,\Sigmab)$ with positive definite covariance matrix $\Sigmab$.
    Then there exist $\dect$ and $\Wbt$ such that $\dect$, $\Wbt$, and $\PPt=\Ncal(-\Wbt\ab_0,\Ib)$ generates the same distributions, i.e., 
    \begin{equation}
    \label{eq:mean_covariance_ambiguity_no_intercept}
        \forall e\in[M]_0: \qquad \dec(\Zb+\Wb\ab_e)\overset{d}{=}\dect(\Zbt+\Wbt\ab_e), \qquad \text{where} \quad \Zb\sim\Ncal(\mub,\Sigmab) \quad \text{and} \quad \Zbt\sim\Ncal(-\Wbt\ab_0,\Ib),
    \end{equation}
    with $\dect$ and $\Wbt$ also satisfying assumptions \textit{(i)} and \textit{(iii)} of~\cref{thm:affine_identifiability_gaussian}.
\end{lemma}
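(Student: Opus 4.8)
The plan is to exhibit an explicit affine reparametrization of the latent space that turns a model with Gaussian base $\PP=\Ncal(\mub,\Sigmab)$ into one whose base distribution is $\Ncal(-\Wbt\ab_0,\Ib)$, while preserving the observed distributions and assumptions \textit{(i)} and \textit{(iii)}. The key observation is that for any invertible affine map $\zb\mapsto \Mb\zb+\bb$, we can "absorb" it into the decoder: if $\dect := \dec\circ(\zb\mapsto \Mb^{-1}(\zb-\bb))$ and we correspondingly transform $\Wb$ and the base law, the pushforward to observation space is unchanged. Since $\Sigmab$ is positive definite, it has an invertible symmetric square root $\Sigmab^{1/2}$, and I would take $\Mb=\Sigmab^{1/2}$ together with a translation chosen to realign the mean.

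Concretely, first I would set $\Wbt := \Sigmab^{-1/2}\Wb$ and define $\Zbt := \Sigmab^{-1/2}(\Zb-\mub) - \Wbt\ab_0$, so that $\Zbt\sim\Ncal(-\Wbt\ab_0,\Ib)$ as required by assumption \textit{(ii)}. Then $\Zb+\Wb\ab_e = \Sigmab^{1/2}\big(\Zbt+\Wbt\ab_0+\Sigmab^{-1/2}\mub\big)+\Wb\ab_e = \Sigmab^{1/2}\big(\Zbt+\Wbt\ab_e\big)+\mub$, using $\Sigmab^{1/2}\Wbt\ab_e=\Wb\ab_e$ and $\Sigmab^{1/2}\Wbt\ab_0=\Wb\ab_0$ — wait, I need to recheck the bookkeeping: expanding $\Sigmab^{1/2}(\Zbt+\Wbt\ab_e) = \Sigmab^{1/2}\Sigmab^{-1/2}(\Zb-\mub) - \Sigmab^{1/2}\Wbt\ab_0 + \Sigmab^{1/2}\Wbt\ab_e = (\Zb-\mub) + \Wb(\ab_e-\ab_0)$, which is not quite $\Zb+\Wb\ab_e-\mub$. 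So the clean choice is instead $\Zbt := \Sigmab^{-1/2}(\Zb-\mub)$, giving $\Zbt\sim\Ncal(\bm 0,\Ib)$, and then I would use the freedom in the mean of $\PPt$ together with $\Wbt$. The point of assumption \textit{(ii)} is that $\PPt$ is centered so that domain $e=0$ has latent mean at the origin: with $\Zbt\sim\Ncal(\bm0,\Ib)$ I'd want $\Ncal(-\Wbt\ab_0,\Ib)$, so I should instead take $\Zbt\sim\Ncal(-\Wbt\ab_0,\Ib)$ by absorbing the shift $-\Wbt\ab_0+\Sigmab^{-1/2}\mub$ differently. The cleanest route: define $\Wbt:=\Sigmab^{-1/2}\Wb$ and $\dect(\ub):=\dec\big(\Sigmab^{1/2}\ub + \Wb\ab_0 + \mub - \Sigmab^{1/2}(-\Wbt\ab_0)\big)$, simplifying the argument to $\Sigmab^{1/2}\ub + \mub$; then check directly that $\dect(\Zbt+\Wbt\ab_e) = \dec(\Sigmab^{1/2}\Zbt + \Sigmab^{1/2}\Wbt\ab_e + \mub) = \dec(\Sigmab^{1/2}\Zbt + \Wb\ab_e + \mub)$, and since $\Sigmab^{1/2}\Zbt + \mub \sim \Ncal(\mub - \Sigmab^{1/2}\Wbt\ab_0 + \mub,\Sigmab)$... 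I see the translations still need care. I would handle this by writing the affine change of variables abstractly as $\phi(\zb)=\Sigmab^{1/2}\zb+\vb$ with $\vb$ a translation to be fixed, requiring (a) $\phi_{\#}\PPt=\PP$ pushed appropriately and (b) $\dect=\dec\circ\phi$ with $\Wbt=\Sigmab^{-1/2}\Wb$, then solving for the unique $\vb$ that makes $\phi(\Zbt+\Wbt\ab_e)$ equal in law to $\Zb+\Wb\ab_e$ for all $e$ simultaneously — this reduces to one linear equation in $\vb$ because the $\ab_e$-dependence already matches.

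After the affine change of variables, I would verify the three bookkeeping claims: (1) \emph{observed distributions match}, which is immediate from $\dect(\Zbt+\Wbt\ab_e)=\dec(\phi(\Zbt+\Wbt\ab_e))\overset{d}{=}\dec(\Zb+\Wb\ab_e)$ by the choice of $\vb$ and $\phi_{\#}$; (2) \emph{assumption (i)} — $\dect=\dec\circ\phi$ is a $C^2$-diffeomorphism onto its image since it is the composition of a $C^2$-diffeomorphism with an invertible affine (hence $C^\infty$) map, and the image is unchanged; (3) \emph{assumption (iii)} — $\Wbt\Ab = \Sigmab^{-1/2}\Wb\Ab$, and since $\Sigmab^{-1/2}$ is invertible, $\rank(\Wbt\Ab)=\rank(\Wb\Ab)=\dimZ$, so sufficient diversity is preserved. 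The main obstacle is purely the affine algebra: getting the translation $\vb$ right so that $\PPt=\Ncal(-\Wbt\ab_0,\Ib)$ exactly (the normalization pinned down in assumption \textit{(ii)}) while keeping all $M+1$ observed distributions aligned — this is a finite linear computation with no real difficulty, just a risk of sign/indexing errors, so I would set it up carefully with $\phi$ defined once and reused throughout rather than re-deriving shifts in each domain.
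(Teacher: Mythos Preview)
Your approach is essentially identical to the paper's: set $\Wbt=\Sigmab^{-1/2}\Wb$ and $\dect=\dec\circ\phi$ for an affine $\phi(\zb)=\Sigmab^{1/2}\zb+\vb$, then verify (i) and (iii) via composition with an invertible affine map and rank preservation under left-multiplication by $\Sigmab^{-1/2}$. The paper simply writes down the correct translation directly, $\vb=\mub+\Wb\ab_0$, which is exactly what your ``solve for the unique $\vb$'' step would produce once you stop second-guessing the bookkeeping; your several false starts all stem from trying to track the $\ab_0$-shift and the $\mub$-shift separately rather than just computing $\phi(\Zbt+\Wbt\ab_e)\sim\Ncal(\Sigmab^{1/2}\Wbt(\ab_e-\ab_0)+\vb,\Sigmab)=\Ncal(\Wb(\ab_e-\ab_0)+\vb,\Sigmab)$ and matching it to $\Ncal(\mub+\Wb\ab_e,\Sigmab)$ in one line.
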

\begin{proof}
    Define $\dect: \zb\mapsto \dec(\mub+\Wb\ab_0+\Sigmab^{\frac{1}{2}}\zb)$ and let $\Wbt:=\Sigmab^{-\frac{1}{2}}\Wb$. The equality in distribution in~\eqref{eq:mean_covariance_ambiguity_no_intercept} then follows directly by substitution from the properties of linear transformations of Gaussians.
    Since $\dec$ is a $C^2$ diffeomorphism by assumption, so is $\dect$, since it is the composition of $\dec$ with an affine function. Moreover, since $\Sigmab$ (and thus also any inverse square root $\Sigmab^{-\frac{1}{2}}$) is invertible, we have $\rank(\Wbt\Ab)=\rank(\Wb\Ab)$ and thus asssumption \textit{(iii)} also holds. 
\end{proof}

\subsection{Proof of~\texorpdfstring{\cref{thm:affine_identifiability_gaussian}}{}}
\label{app:proof_id_gaussian}

\idgaussian*

\begin{proof}
Let $p_e$ and $\pt_e$ denote the densities of 
\begin{equation}
\label{eq:ze}
    \Zb_e:=\Zb+\Wb\ab_e
\end{equation}
and
\begin{equation}
\label{eq:zetilde}
    \Zbt_e:=\Zbt+\Wbt\ab_e,
\end{equation}
respectively. 
Due to~\eqref{eq:same_observed_distributions}, $\dec$ and $\dect$ have the same image.
Thus, by the invertibility assumption \textit{(i)}, the function $\hb:=\dect^{-1}\circ \dec:\RR^{\dimZ}\to\RR^\dimZ$ is a well-defined $C^2$-diffeomorphism. 
The change of variable formula applied to 
\begin{equation}
  \Zbt_e\overset{d}{=}\hb(\Zb_e)  
\end{equation}
then yields for all $e$ and all~$\zb$:
\begin{equation}
\label{eq:change_of_var}
    p_e(\zb)=\pt_e\left(\hb(\zb)\right)\abs{\det \Jb_\hb(\zb)},
\end{equation}
where $\Jb_\hb(\zb)$ denotes the Jacobian of $\hb$.
By taking logarithms of~\eqref{eq:change_of_var} and contrasting domain $e$ with a reference domain with $e=0$, the determinant terms cancel and we obtain for all $e$ and all $\zb$:
\begin{equation}
\label{eq:domain_log_contrast}
    \log p_e(\zb) -\log p_0(\zb)
    =
    \log \pt_e\left(\hb(\zb)\right) -\log \pt_0\left(\hb(\zb)\right).
\end{equation}
Next, denote the densities of $\Zb$ and $\Zbt$ by $p$ and $\pt$, respectively.
From~\eqref{eq:ze} and~\eqref{eq:zetilde} it then follows that, for all $e$ and all $\zb$,  $p_e$ and $\pt_e$ can respectively be expressed in terms of $p$ and $\pt$ as follows:
\begin{align}
    p_e(\zb)&=p\left(\zb-\Wb\ab_e\right),
    \\
    \pt_e(\zb)&=\pt\left(\zb-\Wbt\ab_e\right).
\end{align}
By substituting these expressions in~\eqref{eq:domain_log_contrast}, %
we obtain for all $e$ and all $\zb$:
\begin{equation}
\label{eq:proof_starting_point}
    \log p(\zb-\Wb\ab_e) -\log p(\zb-\Wb\ab_0)
    =
    \log \pt\left(\hb(\zb)-\widetilde \Wb\ab_e\right) -\log \pt\left(\hb(\zb)-\widetilde \Wb\ab_0\right).
\end{equation}

By the Gaussianity assumption~\textit{(ii)}, the contrast of log-densities in~\eqref{eq:proof_starting_point} takes the following form for all $e$ and all $\zb$:
\begin{align}
    &(\ab_e-\ab_0)^\top \Wb^\top (\zb+\Wb\ab_0) & -\frac{1}{2}(\ab_e-\ab_0)^\top \Wb^\top \Wb (\ab_e+\ab_0)
    \\
    =\,&(\ab_e-\ab_0)^\top \Wbt^\top \left(\hb(\zb)+\Wbt\ab_0\right) & -\frac{1}{2}(\ab_e-\ab_0)^\top \Wbt^\top \Wbt (\ab_e+\ab_0)
\end{align}

We aim to show that
the two representations are related by an affine transformation, i.e., that all second-order derivatives
of $\hb$ are zero everywhere.
Taking gradients w.r.t.\ $\zb$ yields for all $e$ and all~$\zb$:
\begin{equation}
    (\ab_e-\ab_0)^\top \Wb^\top %
    =(\ab_e-\ab_0)^\top \Wbt^\top %
    \Jb_\hb(\zb).
\end{equation}

Let $\Ab\in\RR^{K\times M}$ be the matrix with columns $\ab_e-\ab_0$ for $e\in[M]$. Then, for all $\zb$:
\begin{equation}
\label{eq:jacobian}
    \Ab^\top \Wb^\top %
    =\Ab^\top \Wbt^\top %
    \Jb_\hb(\zb).
\end{equation}

Differentiating once more w.r.t.\ $\zb$ yields for all $\zb$:
\begin{equation}
    \label{eq:hessian}
    \bm 0 = \Ab^\top \Wbt^\top %
    \Hb_\hb(\zb),
\end{equation}
where the 3-tensor
$\Hb_\hb(\zb)\in\RR^{\dimZ\times \dimZ\times \dimZ}$ denotes the Hessian of $\hb$, i.e., for all $i,j\in[\dimZ]$ and all $\zb$
\begin{equation}
\label{eq:second_order_partials}
    \bm 0 = \Ab^\top \Wbt^\top 
    \frac{\partial^2}{\partial z_i \partial z_j}\hb(\zb).
\end{equation}
By assumption \textit{(iii)}, the matrix $\Ab^\top\Wbt^\top$ has full column rank and thus a left inverse, i.e., there exists $\Vb\in\RR^{\dimZ\times M}$ such that $\Vb\Ab^\top\Wbt^\top=\Ib_{\dimZ}$.
Multiplying~\eqref{eq:second_order_partials} on the left by $\Vb$ then yields for all $i,j\in[\dimZ]$ and all $\zb$:
\begin{equation}
    \frac{\partial^2}{\partial z_i \partial z_j}\hb(\zb)=\bm 0.
\end{equation}

This implies that $\hb$ must be affine, i.e., there exist $\Mb\in\RR^{\dimZ\times\dimZ}$ and $\bb\in\RR^\dimZ$ such that for all $\zb$:
\begin{equation}
\label{eq:affine_transformation_h}
    \hb(\zb)=\Mb\zb+\bb.
\end{equation}

Further, since $\hb$ is invertible, $\Mb$ must be invertible.

Recall that $\Zbt_e\overset{d}{=}\hb(\Zb_e)=\Mb\Zb_e+\bb$. It follows from~\eqref{eq:ze}, \eqref{eq:zetilde} and assumption \textit{(ii)} that for all $e\in[M]_0$:
\begin{equation}
    \Ncal\left(\Wbt(\ab_e-\ab_0), \Ib\right)
    =\Ncal\left(\Mb\Wb(\ab_e-\ab_0)+\bb, \Mb\Mb^\top
    \right).
\end{equation}

By equating the covariances, we find
\begin{equation}
    \Mb\Mb^\top=\Ib, 
\end{equation}
i.e., $\Mb=\Ob$ for some orthogonal matrix $\Ob\in O(\dimZ)$.

Similarly, by equating the means, we obtain for all $e$:
\begin{equation}
\label{eq:constraint_on_W}
    \Wbt(\ab_e-\ab_0)=\Ob\Wb(\ab_e-\ab_0)+\bb %
\end{equation}
For $e=0$, \eqref{eq:constraint_on_W} yields 
\begin{equation}
    \bb=\bm 0.
\end{equation} 
Stacking~\eqref{eq:constraint_on_W} for all other $e\in[M]$, we obtain
\begin{equation}
\label{eq:general_form_W}
    \Wbt\Ab = \Ob\Wb\Ab.
\end{equation}
This completes the proof.
\end{proof}

\subsection{Proof of~\texorpdfstring{\cref{cor:id_gaussian}}{}}
\label{app:proof_corollary_id_gaussian}
\corollaryidgaussian*
\begin{proof}
If $\rank(\Ab)=K$, then $\Ab$ has a right inverse, i.e., there exists $\Kb\in\RR^{M\times K}$ such that $\Ab\Kb=\Ib_K$. 
Right multiplication of~\eqref{eq:general_form_W} by $\Kb$ then yields
\begin{equation}
\label{eq:special_form_W}
    \Wbt=\Ob\Wb.
\end{equation}
\end{proof}

\subsection{Proof of~\texorpdfstring{\cref{thm:extrapolation}}{}}
\label{app:proof_extrapolation}
\extrapolation*
\begin{proof}
With $\hb=\dect^{-1}\circ\dec$, the condition in~\eqref{eq:equal_test_prediction} is equivalent to
\begin{equation}
\label{eq:equivalent_condition}
    \hb(\Zb+\Wb\ab_\te)\overset{d}{=}\Zbt+\Wbt\ab_\te.
\end{equation}
By~\cref{thm:affine_identifiability_gaussian}, we have $\hb(\zb)=\Ob\zb$ for some orthogonal matrix $\Ob\in O(d_Z)$.

Together with $\Zb\sim\Ncal(-\Wb\ab_0,\Ib)$, this lets us compute the distribution of 
the LHS of~\eqref{eq:equivalent_condition} as:
\begin{equation}
\label{eq:LHS_distribution}
\hb(\Zb+\Wb\ab_\te)=\Ob(\Zb+\Wb\ab_\te)\sim\Ncal\left(-\Ob\Wb\ab_0+\Ob\Wb\ab_\te,\Ib\right).
\end{equation}
Similarly, since $\Zbt\sim\Ncal\left(-\Wbt\ab_0,\Ib\right)$, the distribution of the RHS of~\eqref{eq:equivalent_condition} is given by
\begin{equation}
\label{eq:RHS_distribution}
\Zbt+\Wbt\ab_\te\sim \Ncal\left(-\Wbt\ab_0+\Wbt\ab_\te,\Ib\right).
\end{equation}
To complete the proof, it thus remains to show that the means of~\eqref{eq:LHS_distribution} and~\eqref{eq:RHS_distribution} are equal, i.e., 
\begin{equation}
\label{eq:equal_means}
    \Ob\Wb\left(\ab_\te-\ab_0\right)=\Wbt\left(\ab_\te-\ab_0\right)
\end{equation}
Next, by~\eqref{eq:span}, i.e., the assumption that $(\ab_\te-\ab_0)$ lies in the span of $\{\ab_e-\ab_0\}_{e\in[M]}$, there exists $\alphab\in\RR^M$ such that
\begin{equation}
\label{eq:linear_combination}
\textstyle
    \ab_\te-\ab_0 = \sum_{e\in[M]}\alpha_e(\ab_e-\ab_0)=\Ab\alphab
\end{equation}
where, as before, $\Ab\in\RR^{K\times M}$ is the matrix with columns $(\ab_e-\ab_0)$.
Substituting~\eqref{eq:linear_combination} into~\eqref{eq:equal_means} yields
\begin{align}
\label{eq:second_last_step}
    \Ob\Wb\Ab\alphab=\Wbt\Ab\alphab
\end{align}
Finally, it follows from~\cref{thm:affine_identifiability_gaussian} that 
\begin{equation}
\label{eq:AWLWL}
    \Ob\Wb\Ab=\Wbt\Ab.
\end{equation}
which upon substitution into~\eqref{eq:second_last_step} 
completes the proof.
\end{proof}

\subsection{Proof of~\texorpdfstring{\cref{prop:sems_special_case}}{}}
\sems*
\begin{proof}
The distribution of $\Zb$ induced by the SEM is most easily understood via the \textit{reduced form} expression
\begin{equation}
\label{eq:linear_SCM_reduced_form}
    \Zb=(\Ib-\Bb^\top)^{-1}\etab
\end{equation}
where the inverse $(\Ib-\Bb^\top)^{-1}$ exists since, by assumption, $\rho(\Bb)<1$.
The observational (unintervened) distribution is thus given by $\PP_\Zb=(\Ib-\Bb^\top)^{-1}_\#\PP_{\etab}$. 
Now consider the shift interventions from~\eqref{eq:SEM_shift_interventions}.
Analogous to~\eqref{eq:linear_SCM_reduced_form}, the reduced form of~\eqref{eq:SEM_shift_interventions} is given by
\begin{equation}
\label{eq:linear_SCM_shift_intervention_reduced_form}
\Zb%
=(\Ib-\Bb^\top)^{-1}\etab +(\Ib-\Bb^\top)^{-1}\ab_e.
\end{equation}
which is equal in distribution to
\begin{equation}
    \Zb^\pert=\Zb^\base+\Wb\ab_e
\end{equation}
with $\Zb^\base\sim\PP_\Zb=(\Ib-\Bb^\top)^{-1}_\#\PP_{\etab}$ and $\Wb=(\Ib-\Bb^\top)^{-1}$.
\end{proof}
\begin{remark}
    The correspondence between mean shift perturbations and shift interventions appears to only hold  for \textit{linear} SCMs (at least in this simple form). Consider, for example, a nonlinear additive noise model,
    \begin{equation}
    \label{eq:nonlinear_additive_noise_model}
        \Zb:=\fb(\Zb)+\etab,
    \end{equation}
    with reduced form given by
        $\Zb=\gb(\etab)$,
    where $\gb$ is the inverse of the mapping $\zb\mapsto \zb-\fb(\zb)$.
    For shift interventions in~\eqref{eq:nonlinear_additive_noise_model} to match our perturbation model, we then must have
    \begin{equation}
        \Zb_e=\gb(\etab+\ab_e)= \gb\left(\gb^{-1}(\Zb)+\ab_e\right)=\phib\left(\Zb,\ab_e\right)
    \end{equation}
    for suitable $\phib$ and $\ab_e$.
    Thus, if $\fb$ is nonlinear, so is $\gb$ and this implies under weak assumptions that $\phib$ is nonlinear, too. In this case, shift interventions in a nonlinear SCM do not, in general, amount to mean shift perturbations (i.e., linear~$\phib$). 
\end{remark}

\section{PDAE training: Additional losses and complete algorithm}
\label{app:algorithm}
In this appendix, we discuss additional loss compoments (other than the perturbation loss from the main text), which are not required based on our theoretical results, but which may still benefit empirical performance.

\textbf{Reconstructions.}
When considering the synthetic observations from~\eqref{eq:synthetic_observations} for $h=e$, i.e., when evaluated on the estimated perturbed latents $\zbh^\pert_{e,i}$, we refer to the corresponding (random) decoder outputs as \textit{reconstructions},
\begin{equation}
\label{eq:reconstructions}
\Xbh_{e,i}=\dech\big(\zbh^\pert_{e,i}, \,\epsilonb%
\big) \qquad \text{where} \qquad \epsilonb%
\sim\QQ_\epsilonb.
\end{equation}

\textbf{Conditional distribution over reconstructions.}
For any given observation $\xb$, we can sample reconstructions from the stochastic decoder by fixing its first argument to the encoding $\ench(\xb)$
and sampling noise from $\QQ_\epsilonb$ as in~\eqref{eq:reconstructions}. 
Together, the encoder, decoder, and noise distribution thus induce a distribution over reconstructions, formally given by the push-forward 
\begin{equation}
\label{eq:conditional_reconstruction_distribution}
\dech\left(\ench(\xb),\cdot\right)_\#\QQ_\epsilonb.
\end{equation}
Since this distribution should ideally match that of $\Xb|\ench(\Xb)=\ench(\xb)$ (i.e., the true distribution of observations that share the same encoding),\footnote{\looseness-1 For example, if $\ench$ is injective, the true reconstruction distribution is a point mass on $\xb$,  whereas if $\ench$ is constant, it is the full (unconditional) distribution of $\Xb$. 
In this sense, the distribution of reconstructions captures the information about $\xb$ that is left unexplained given only its encoding~\citep{shen2024distributional}.} we denote it by
 $\PPh_{\Xb|\ench(\Xb)=\ench(\xb)}$. 
Note that the first argument of $\dech$ in~\eqref{eq:conditional_reconstruction_distribution} is considered fixed, and all stochasticity stems from the noise~$\epsilonb$.
This contrasts with the simulated perturbed distributions $\PP_{e\to h}$ in~\eqref{eq:induced_synthetic_distribution}, where both $\Xb_e$ and $\epsilonb$ are random.   

\textbf{Conditional reconstruction loss.} 
The conditional reconstruction loss $\Lcal_\text{cond-rec}$ is given by the sum of domain-specific DPA losses~\citep{shen2024distributional},
\begin{equation}
\begin{aligned}
\label{eq:reconstruction_loss_main}
    &\Lcal_\text{cond-rec}\left(\ench,\dech; \left\{\PP_e\right\}_{e\in[M]_0}\right)
    :=\sum_{e\in[M]_0} 
    -\EE_{\Xb_e\sim\PP_e}
    \left[\ES_\beta
    \left(\PPh_{\Xb|\ench(\Xb)=\ench(\Xb_e)}, \Xb_e
    \right)
    \right],
\end{aligned}
\end{equation}
\looseness-1 where each summand on the RHS of~\eqref{eq:reconstruction_loss_main} involves a \textit{conditional} energy score (i.e., one whose first argument also depends on $\Xb_e$), see~\cref{sec:representation_learning} for further discussion.
It should be noted that the pairwise sum in the perturbation loss of~\eqref{eq:combined_loss} also includes the case $h=e$.  Enforcing $\PP_e=\PPh_{e\to e}$ can be viewed as a marginal reconstruction objective and thus as a weaker constraint than that enforced by~$\Lcal_\text{cond-rec}$.

\begin{figure}[t]
    \centering
    \includegraphics[width=\linewidth]{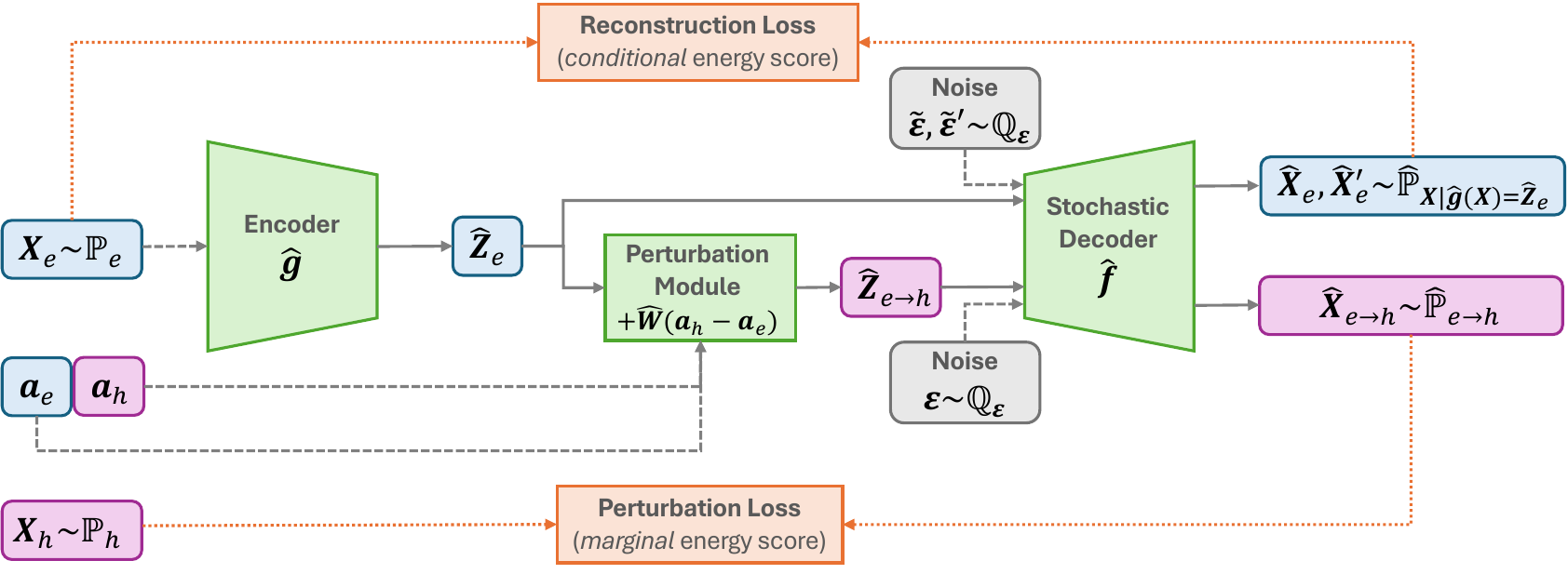}
    \caption{\textbf{Illustration of the full PDAE architecture including the reconstruction loss.} The distributional reconstruction loss \textit{(top)}, measures, for each source domain $e$, the dissimilarity between the (empirical) 
     true conditional distribution %
     of source observations~$\Xb_e$ that are mapped to the same encoding $\Zbh_e$ and the corresponding distribution %
     induced by the stochastic decoder.}
    \label{fig:pdae_full}
\end{figure}

\textbf{Prior loss.} To turn PDAE into a full generative model and remove unnecessary ambiguities from the learned representation, we propose an additional loss which quantifies deviations of the encoded latent basal state distributions from a fixed Gaussian prior.
Since, by~\cref{lemma:mean_covariance_ambiguity_no_intercept}, the prior mean and covariance are arbitrary, we choose $\PP^\base=\Ncal(\bm0,\Ib)$ for convenience.
Denote by $\PPh_{\ench(\Xb_e)-\Wb\ab_e}$ the distribution of
\begin{equation}
\ench(\Xb_e)-\Wbh\ab_e, \qquad \text{where} \qquad \Xb_e\sim\PP_e.
\end{equation}
We then define the prior loss as the sum over domains of the negative expected energy score between the true fixed prior and the encoded basal state distribution from domain $e$,
\begin{equation}
    \Lcal_\text{prior}\left(\ench,\Wbh;\left\{\left(\PP_e,\ab_e\right)\right\}_{e=0}^M\right):=\sum_{e\in[M]_0}-\EE_{\Zb^\base\sim\PP^\base}\left[\ES_\beta\left(\PPh_{\ench(\Xb_e)-\Wbh\ab_e},\Zb^\base\right)\right].
\end{equation}
In practice, the prior loss can be approximated using the estimated latent basal states,
\begin{equation}
    \label{eq:estimated_basal_states}
    \zbh^\base_{e,i}:=\zbh^\pert_{e,i}-\Wbh\ab_e=\ench(\xb_{e,i})-\Wbh\ab_e
\end{equation}

\textbf{Sparsity.} \looseness-1 Similar to~\citet{bereket2023modeling}, one could also consider a sparsity penalty $\norm{\Wbh}_{2,1}$ on the inferred $\Wbh$. (We only include this for completeness; the present work does not explore sparsity.)

\textbf{Combined loss.} Given loss weight hyperparameters $(\lambda_\textsc{r},\lambda_\text{prior},\lambda_\textsc{s})$ and denoting the perturbation loss from~\eqref{eq:combined_loss} by $\Lcal_\pert\left(\ench,\dech,\Wbh; \left\{\left(\PP_e,\ab_e\right)\right\}_{e=0}^M\right)$ we can then combine the different losses into one objective as follows:
\begin{equation}
\begin{aligned}
    \label{eq:combined_losses}
    \left(\ench^*,\dech^*,\Wbh^*\right) \in \argmin_{\ench,\dech,\Wbh}\, 
    &\Lcal_\pert\left(\ench,\dech,\Wbh; \left\{\left(\PP_e,\ab_e\right)\right\}_{e=0}^M\right)+\lambda_\textsc{r} \Lcal_\text{cond-rec}\left(\ench,\dech; \left\{\PP_e\right\}_{e=0}^M\right)
    \\
    &+ \lambda_\text{prior}\Lcal_\text{prior}\left(\ench,\Wbh;\left\{\left(\PP_e,\ab_e\right)\right\}_{e=0}^M\right) +\lambda_\textsc{s}\norm{\Wbh}_{2,1}.
\end{aligned}
\end{equation}
\looseness-1 To avoid that the encoder captures latent dimensions of maximal variation as in DPA~\citep{shen2024distributional} rather than focus on the perturbation-relevant latents $\Zb$, we do not update the encoder parameters using the reconstruction loss.

\textbf{Algorithm.} The complete algorithm for one training iteration for PDAE is provided in~\cref{alg:training}.

\begin{algorithm}[tbp]
    \caption{PDAE training step}
    \label{alg:training}
    \begin{algorithmic}[1]
    \doublespacing
        \REQUIRE mini-batch $\{(\xb_i, \ab_i)\}_{i=1}^B$, model $\left(\ench_\phib,\dech_\thetab, \Wbh\right)$, learning rates $(\eta_\theta,\eta_\phi, \eta_\textsc{w})$, loss weights $(\lambda_\textsc{r},\lambda_\text{prior},\lambda_\textsc{s})$
        \STATE $\Lcal_\text{cond-rec}(\phib,\thetab),\Lcal_\pert\left(\phib,\thetab,\Wbh\right)\gets 0,0$ \COMMENT{initialize losses}
        \FOR{$i=1, \dots, B$}
            \STATE $\zbh^\pert_i \gets \ench_\phib(\xb_i)$ 
            \COMMENT{encode}
            \STATE $\epsilonb_i,\epsilonb_i'\sim \Ncal(\bm 0, \Ib)$ 
            \COMMENT{noise for reconstructions}
            \STATE $\xbh_i, \xbh_i' \gets \dech_\thetab\left(\zbh^\pert_i,\epsilonb_i\right),\dech_\thetab\left(\zbh^\pert_i,\epsilonb_i'\right)$
            \COMMENT{noisy reconstructions}
            \STATE $\Lcal_\text{cond-rec}(\phib,\thetab)\pluseq \frac{1}{2B}\left(\norm{\xb_i-\xbh_i} +\norm{\xb_i-\xbh_i'} -\norm{\xbh_i-\xbh_i'} \right)$             
            \COMMENT{conditional energy score}
            \STATE $\zbh^\base_i\gets \zbh_i^\pert -\Wbh\ab_i$             \COMMENT{estimated basal states}
            \FOR{$j\in[B]\setminus \{i\}$}
                \STATE $\zbh^\pert_{j\to i}\gets \zbh_j^\base+\Wbh\ab_i$             \COMMENT{synthetic perturbed latents}

                \STATE $\epsilonb_{ji}\sim\Ncal(\bm 0, \Ib)$
                \COMMENT{noise for synthetic observations}
                \STATE $\xbh_{j\to i}\gets \dech_\theta\left(\zbh^\pert_{j\to i},\epsilonb_{ji}\right)$ 
                \COMMENT{synthetic observations}
            \ENDFOR
            \STATE $\Lcal_\pert\left(\phib,\thetab,\Wbh\right)\pluseq \frac{1}{B(B-1)}\sum_{j\in[B]\setminus\{i\}} \left[\norm{\xb_i-\xbh_{j\to i}} -\frac{1}{B-2}\sum_{i\neq j'<j}\norm{\xbh_{j\to i}-\xbh_{j' \to i}} \right]$
            \STATE $\xib_i\sim\Ncal\left(\bm0, \Ib\right)$
            \COMMENT{noise for prior loss}
        \ENDFOR
        \STATE $\Lcal_\text{prior}\left(\phib,\Wbh\right)\gets 
        \frac{1}{B^2}\sum_{i,j\in[B]} \norm{\xib_i-\zbh^\base_j}-\frac{1}{B(B-1)}\sum_{j'<j\in[B]}\norm{\zbh^\base_j-\zbh^\base_{j'}}$
        \COMMENT{marginal energy score}
        \STATE $\thetab \gets \thetab - \eta_\theta\nabla_\thetab \left(\Lcal_\pert\left(\phib,\thetab,\Wbh\right)+\lambda_\textsc{r}\Lcal_\text{cond-rec}(\phib,\thetab)\right)$
        \COMMENT{update decoder}
        \STATE $\phib \gets \phib - \eta_\phi\nabla_\phib \left(\Lcal_\pert\left(\phib,\thetab,\Wbh\right)+\lambda_\text{prior}\Lcal_\text{prior}\left(\phib,\Wbh\right)\right)$
        \COMMENT{update encoder}
        \STATE $\Wbh \gets \Wbh - \eta_\textsc{w}\nabla_{\Wbh}\left(\Lcal_\pert\left(\phib,\thetab,\Wbh\right)+\lambda_\text{prior}\Lcal_\text{prior}\left(\phib,\Wbh\right)+\lambda_\textsc{s}\norm{\Wbh}_{2,1}\right)$
    \end{algorithmic}
\end{algorithm}

\textbf{Code.} We will publicly release PDAE code upon publication.

\clearpage
\section{Additional details and results for real-world gene perturbation data}
\label{app:additional_details_and_results_gene_data}
\textbf{Hyperparameter choices.} Following~\citet{miller2025deep}, we use the hyperparameters prescribed in the respective publications (GEARS, PRESAGE, scLambda). For PDAE, motivated by our theory, we use a latent space of size $d_Z = K$ with $K$ the number of single perturbations in each dataset (105 for Norman19 and 28 for Wessels23). For both experiments, we use $\beta=1$, a reconstruction loss weight of $\lambda_\textsc{r}=0.05$, a prior loss weight of $\lambda_\text{prior}=0.0001$, no sparsity ($\lambda_S=0$), and an encoder and decoder with [2048, 512] and [512, 2048] hidden units, respectively, with ELU activation and softplus output layer. These choices were selected from a handful of manual runs on Norman19 (i.e., no extensive hyperparameter tuning was carried out). We also use the validation set for early stopping based on the perturbation energy loss.

The results for the same experiment reported in the main paper but on the data from~\citet{wessels2023efficient} are shown in~\cref{fig:results_wessels}, see the caption for details.

\begin{figure}[tbp]
    \centering
    \includegraphics[height=0.35\linewidth, trim={0 0 0em 5em}, clip]{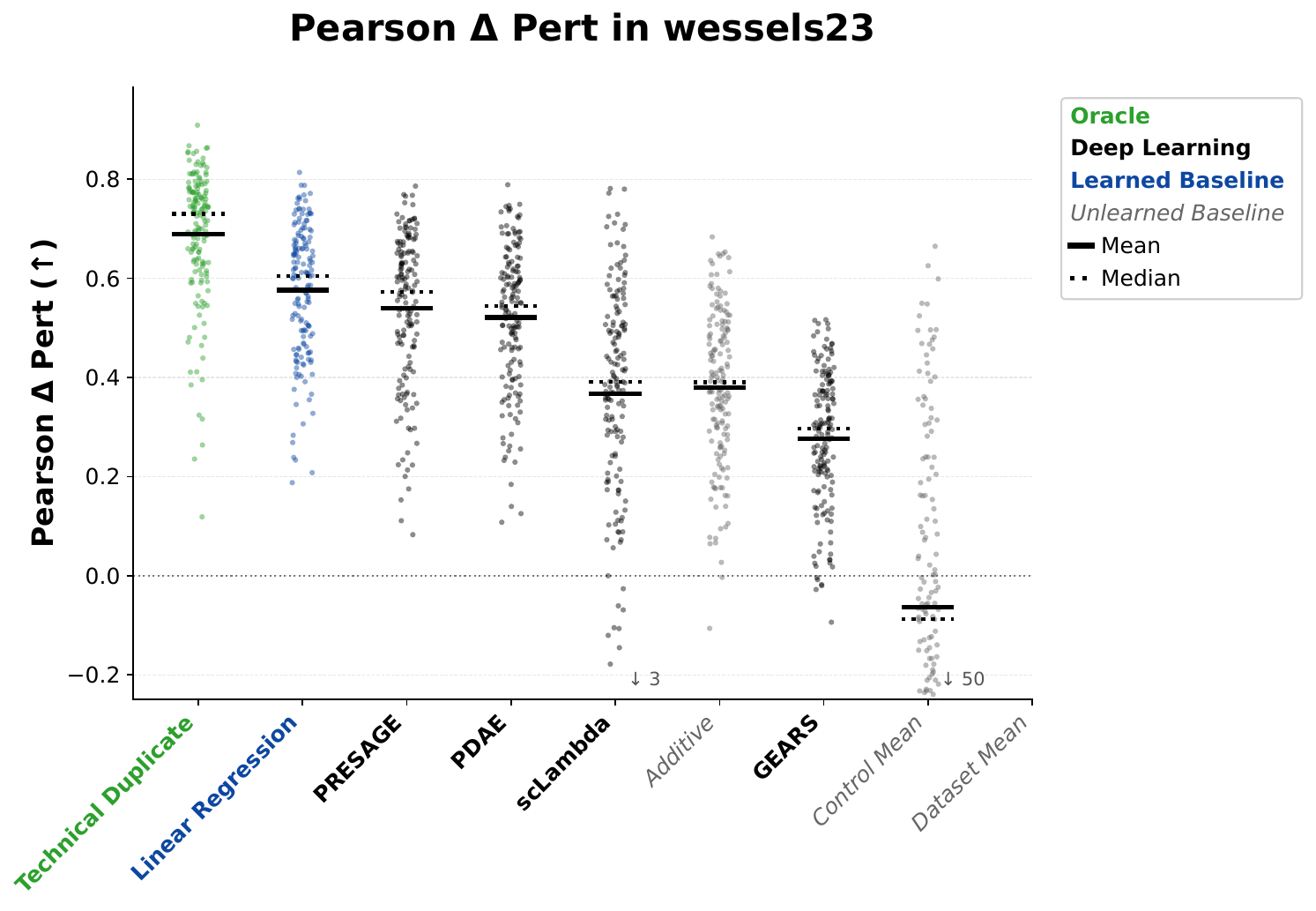}%
    \includegraphics[height=0.35\linewidth, trim={0 0 15em 5em}, clip]{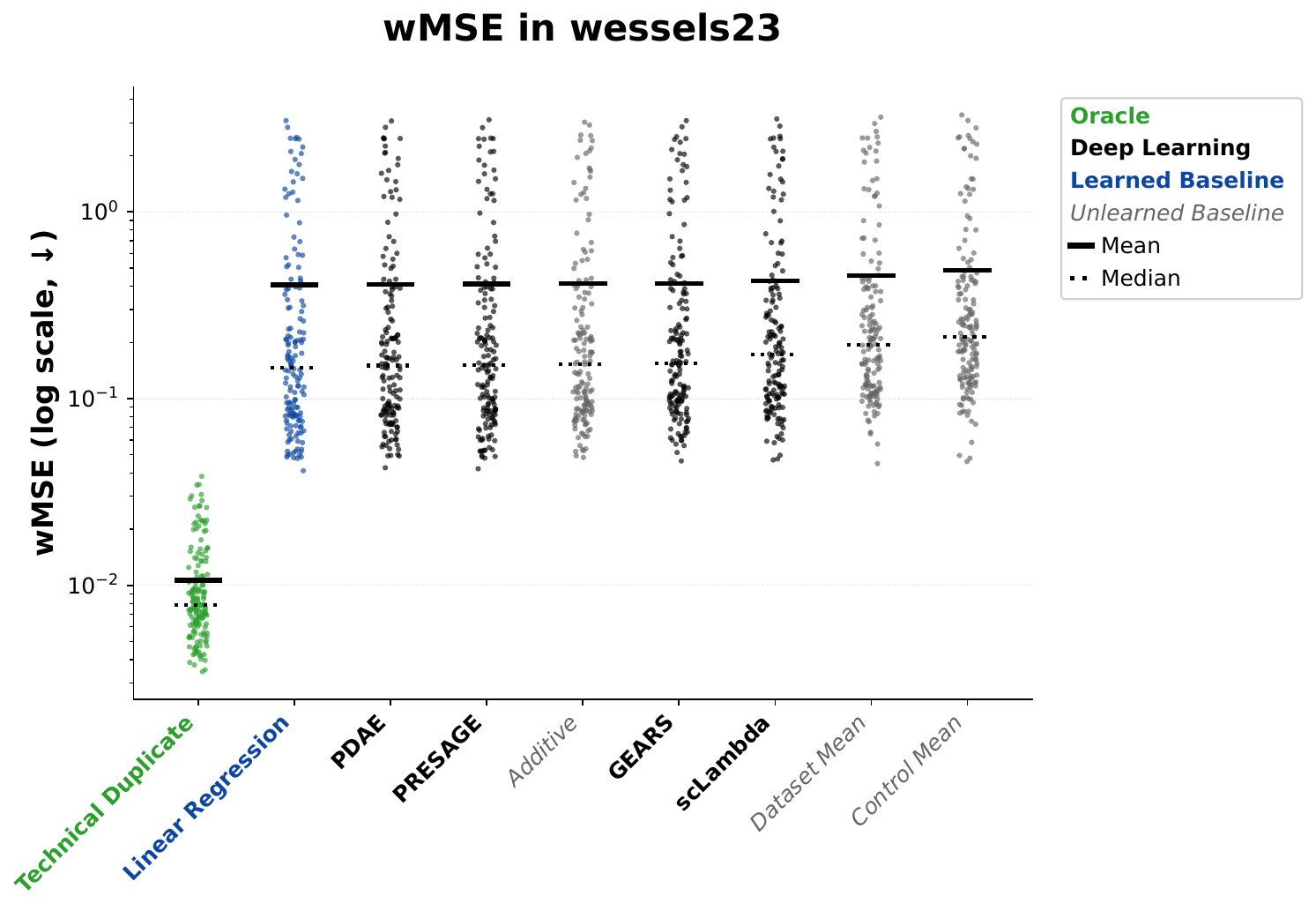}
    \caption{\small \looseness-1 \textbf{Results for gene perturbation prediction.} Shown are wPearson$\Delta$Pert (left) and wMSE (right) on the CRISPRi combinatorial PerturbSeq data from~\citet{wessels2023efficient}. Each dot in a strip-plot corresponds to a different test double perturbation. Due to the large spread, methods are sorted by median.
    In terms of wPearson$\Delta$Pert, PDAE performs better than baselines, GEARS, and scLambda, and comparable to or slightly worse than PRESAGE and its linear instantiation.  In terms of wMSE (right), all methods perform similarly bad and---in contrast to the \citet{norman2019exploring} data---the performance gap to the oracle is very large.} 
    \label{fig:results_wessels}
\end{figure}

\section{Additional details and results for simulations on synthetic data}
\label{app:experimental_details}
Here, we provide a more detailed account and additional results for the simulation study on synthetic data. 
We first summarize the main points in~\cref{app:summary_synthetic_data} and defer additional details to the following subsections.

\subsection{Summary}
\label{app:summary_synthetic_data}

\textbf{Methods.}
We compare our approach with CPA~\citep{lotfollahi2023predicting} and the following baselines: 
\begin{itemize}[leftmargin=*]
    \item \textit{Pool all}: output the distribution and mean obtained by pooling all training observations;%
    \item \textit{Pseudobulking}: like \textit{Pool All} but pool only training data arising from individual perturbations involved in the combination to be predicted
    (e.g., pool data from $(1,0,0)^\top$ and $(0,1,0)^\top$ to predict $\ab_\te=(1,1,0)^\top$);%
    \item \looseness-1 \textit{Linear regression}: linearly regress the domain-specific means of observations $\mub^{\xb}_e\in\RR^\dimX$ on $\ab_e$ and use the resulting model to predict the test mean $\mub_\te^{\xb}$ from~$\ab_\te$.
\end{itemize}
\looseness-1 The former two were also used by~\citet{lotfollahi2023predicting}; we propose the latter as an additional (stronger) baseline.
Since CPA and linear regression only predict a mean, in these cases, we apply the test mean shift to all observations in the reference condition to simulate the test distribution and use this to compute distributional similarity metrics.

\textbf{Metrics.}
\looseness-1 
To assess the distributional fit between predicted and true (empirical) test distributions, we use the energy distance~\citep[ED;][]{szekely2013energy}, i.e., twice the normalized negative expected (marginal) energy score, and the maximum mean discrepancy~\citep[MMD;][]{gretton2012kernel} with Gaussian kernel and bandwidth chosen by the median heuristic, see~\cref{sec:distributional_similarity} for more on measures of distributional similarity.
Since some methods are primarily designed for  mean prediction, we also report the L2 norm of the difference between the predicted and true mean, $\norm{\mub^\xb-\widehat{\mub}^\xb}$.

\textbf{Data.}
\looseness-1 
For ease of visualization, we first consider the case of $\dimZ\!=\!\dimX\!=\!2$-dimensional latents and observations.
The base distribution $\PP_\Zb$ is a zero-mean, isotropic Gaussian with standard deviation~$\sigma\!=\!0.25$.
We consider $K\!=\!3$ elementary perturbations with associated shift vectors $\wb_1\!=\!(1,0)^\top$, $\wb_2\!=\!(0,1)^\top$, and $\wb_3\!=\!(1,1)^\top$ and create $M\!+\!1\!=4$ training domains with perturbation labels $\ab_0\!=\!(0,0,0)^\top$, $\ab_1\!=\!(1,0,0)^\top$, $\ab_2\!=\!(0,1,0)^\top$, and $\ab_3\!=\!(0,0,1)^\top$. In this case, the sufficient diversity condition~\textit{(iii)} used to establish identifiability in~\cref{thm:affine_identifiability_gaussian} is satisfied.
For testing, we distinguish two scenarios: 
in-distribution (ID) and out-of-distribution (OOD) test cases, depending on whether the 
perturbed latents resulting from $\ab_\te$ lie mostly within or outside the support of the empirical distributions of perturbed training latents, 
respectively. 
We consider 14 randomly generated ID and OOD test cases each, see~\cref{fig:dgp_gt_random_labels} and \cref{app:experimental_details} for a visualization and further details. 
To generate observations, we use the complex exponential $\xb\!=\!\dec(\zb)\!=\!\mathrm{e}^{z_1}(\cos z_2,\sin z_2)$ as a deterministic nonlinear mixing function.

\textbf{Experimental details.}
\looseness-1
We generate $N_e=2^{14}$ observations for each domain. Both CPA and PDAE use $4$-hidden-layer MLPs with $64$ hidden units as encoders and decoders, a $\widehat\dimZ=2$-dimensional latent space, and are trained for $2000$ epochs using a batch size of $2^{12}$.
For PDAE, we set the energy score exponent $\beta\!=\!1$ 
and use a learning rate of $0.005$.
\setstretch{0.99}
For CPA, we choose a $1$-hidden-layer MLP with $64$ hidden units for the adversarial classifier. For the remaining hyperparameters, we randomly sample $100$ joint configurations and choose the one with the smallest MMD across $7$ ID test cases, see%
~\cref{app:experimental_details} for details.
We report results for average performance across the remaining $7$ ID and $14$ OOD test cases over $5$ repetitions of randomly generating data and initializing, training, and evaluating models. 

\begin{table}[tbp]
\caption{
\small
\looseness-1
\textbf{Results on synthetic 2D data.}
We report average performances over 7 ID and 14 OOD test cases; shown are the mean $\pm$ one standard deviation over 5 random seeds. Best results are highlighted in bold. Theoretical guarantees for PDAE rely on distributional overlap in the latent space (ID); no method is expected to perform well on OOD test cases.
}
\small
\centering
\label{tab:results}
 \resizebox{\textwidth}{!}{
\begin{tabular}{lcccccc}
\toprule
\multirow{2}{*}{\textbf{Method}} 
& \multicolumn{3}{c}{\textbf{In-Distribution (ID) Test}} &  \multicolumn{3}{c}{\textbf{Out-of-Distribution (OOD) Test}}\\
\cmidrule(lr{0.5em}){2-4} \cmidrule(lr{0.5em}){5-7}
& \textbf{ED} ($\downarrow$) & \textbf{MMD} ($\downarrow$) & $\norm{\mub_\xb-\widehat{\mub}^\xb}_2$ ($\downarrow$)
& \textbf{ED} ($\downarrow$) & \textbf{MMD} ($\downarrow$) & $\norm{\mub_\xb-\widehat{\mub}^\xb}_2$ ($\downarrow$)
\\
\midrule
Pool All & 0.855 \scriptsize{$\pm$ .005} & 0.557 \scriptsize{$\pm$ .004} & 0.441 \scriptsize{$\pm$ .001} & 1.908 \scriptsize{$\pm$ .004} & 1.266 \scriptsize{$\pm$ .004} & 1.298 \scriptsize{$\pm$ .002} \\
Pseudobulking & 0.596 \scriptsize{$\pm$ .004} & 0.716 \scriptsize{$\pm$ .006} & 0.382 \scriptsize{$\pm$ .002} & 1.130 \scriptsize{$\pm$ .005} & 0.965 \scriptsize{$\pm$ .006} & 0.894 \scriptsize{$\pm$ .004} \\
Linear Regression & 0.069 \scriptsize{$\pm$ .001} & 0.181 \scriptsize{$\pm$ .003} & 0.131 \scriptsize{$\pm$ .003} & 0.536 \scriptsize{$\pm$ .010} & 0.808 \scriptsize{$\pm$ .012} & 0.524 \scriptsize{$\pm$ .007} \\
\midrule
CPA & 0.600 \scriptsize{$\pm$ .239} & 1.152 \scriptsize{$\pm$ .287} & 0.516 \scriptsize{$\pm$ .165} & 1.538 \scriptsize{$\pm$ .261} & 1.885 \scriptsize{$\pm$ .241} & 1.132 \scriptsize{$\pm$ .171} \\
PDAE \textbf{(Ours)} & \textbf{0.001 \scriptsize{$\pm$ .000}} & \textbf{0.003 \scriptsize{$\pm$ .001}} & \textbf{0.012 \scriptsize{$\pm$ .003}} & \textbf{0.018 \scriptsize{$\pm$ .006}} & \textbf{0.029 \scriptsize{$\pm$ .007}} & \textbf{0.090 \scriptsize{$\pm$ .017}} \\
\bottomrule
\end{tabular}
 }
\end{table}

\textbf{Results.}
\looseness-1 
The results are shown in~\cref{tab:results}.
For the ID test setting, PDAE performs best, achieving a near-perfect distributional fit. 
CPA outperforms the pooling and pseudobulking baselines (consistent with results reported by~\citet{lotfollahi2023predicting} on gene perturbation data), but does substantially worse than linear regression and PDAE at both mean and distributional prediction.
For the OOD test setting, 
all methods expectedly perform worse, %
with PDAE yielding the least bad performance,\footnote{\looseness-1 We believe that PDAE benefits from implicit regularization in the network and that worse counterexamples can be constructed.}
see~\cref{fig:qualitative_results_main} and its caption for details.

\textbf{Noisy observations.}
We additionally consider a modified version of the aforementioned experiment with $\dimX=10$ (instead of $\dimX=2$) observed dimensions by concatenating $\dimeps=8$ additional dimensions of mean-zero Gaussian noise and investigate the behavior for increasing noise variance (i.e., decreasing signal-to-noise ratio), see~\cref{app:robustness_experiment} for details.
As shown in~\cref{fig:qualitative_results_main}~\textit{(Right)},
 PDAE outperforms other methods for low to medium noise levels, whereas for high noise levels, all methods except CPA exhibit similar performance.

\begin{figure}[t]
    \centering
    \includegraphics[width=\textwidth]{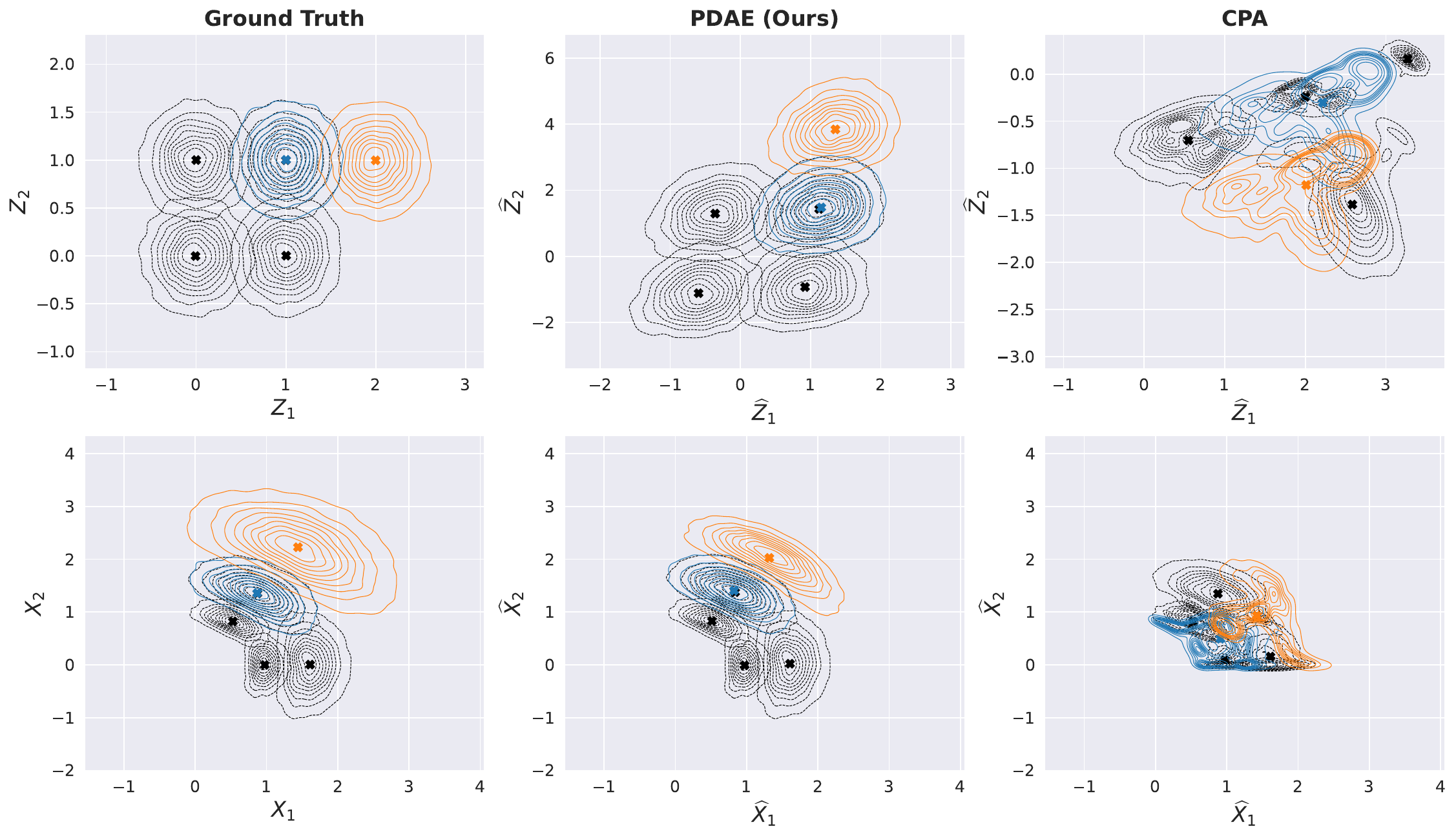}
    \caption{\small \looseness-1 
    \textbf{Qualitative comparison of PDAE and CPA on synthetic 2D data.}
    Rows correspond to latent space (top) and observation space (bottom).  Columns show the ground truth data (left), PDAE predictions (center), and CPA predictions (right).
    Training domains are shown in grey, an in-distribution (ID) test case with $\ab^\textsc{id}_\te\!=\!(1,1,0)^\top$ (which overlaps with one of the training domains) in blue, and an out-of-distribution (OOD) test case with $\ab_\te^\textsc{ood}\!=\!(1,0,1)^\top$ in orange. 
    All plots show kernel density estimates of the distributions; crosses (x) indicate the corresponding means.
    As can be seen, PDAE recovers an affine transformation of the true latents (top, center) leading to accurate distributional predictions for the training and ID test domain (bottom, center). 
    However, the OOD test domain is mapped to a part of the latent space not seen during training (top, center). As a result, the corresponding decoder output does not accurately match the true OOD distribution (bottom left vs center).
    CPA accurately predicts the means of the training distribution (bottom right, black crosses) but does not recover the true latents up to an affine transformation
(top right). 
    Moreover, the predicted distributions for CPA do not match the ground truth particularly well, particularly for the test conditions (bottom left vs right). However, recall that---unlike PDAE---CPA is not trained for distributional reconstruction, see~\cref{sec:CPA} for details.
}
    \label{fig:qualitative_results}
\end{figure}

\begin{table}[tbp]
\centering
\caption{
\textbf{Tuned Hyperparameters of CPA with their search space and optimal value.} We sample 100 random configurations from the joint search space and train a CPA model for 2000 epochs for each. The optimal configuration minimizes the MMD on ID validation data.}
\label{tab:simdata:cpa:hyperparameters}
\begin{tabular}{@{}lll@{}}
\toprule
\textbf{Hyperparameter} & \textbf{Search Space} & \textbf{Selected Optimal Value} \\
\midrule
\texttt{step\_size\_lr}     & $\{15,\,25,\,45\}$        & \num[scientific-notation=false]{15} \\
\texttt{autoencoder\_lr}    & $10^{\mathcal{U}[-5,\,-2]}$ & \num{3.63e-4} \\
\texttt{autoencoder\_wd}    & $10^{\mathcal{U}[-8,\,-4]}$ & \num{5.97e-6} \\
\texttt{reg\_adversary}     & $10^{\mathcal{U}[-2,\,2]}$  & \num{2.31e-2} \\
\texttt{penalty\_adversary} & $10^{\mathcal{U}[-2,\,2]}$  & \num{8.15e-2} \\
\texttt{adversary\_lr}      & $10^{\mathcal{U}[-5,\,-2]}$ & \num{1.7e-4} \\
\texttt{adversary\_wd}      & $10^{\mathcal{U}[-8,\,-4]}$ & \num{3.11e-6} \\
\texttt{adversary\_steps}   & $\{1,\,2,\,3,\,4,\,5\}$     & \num[scientific-notation=false]{5} \\
\bottomrule
\end{tabular}
\end{table}

\subsection{Hyperparameter selection for CPA}
\looseness-1 
For CPA,  hyperparameters are chosen by randomly searching over a predefined parameter space for 100 trials for every dataset, see Tab.~1 of~\citet{lotfollahi2023predicting} for details. To improve the CPA fit on our simulation data and render the comparison with PDAE fairer, we mimic this procedure and run a random search for 100 trials over a subset of these hyperparameters, which we show in~\cref{tab:simdata:cpa:hyperparameters}. We tune all parameters that are neither linked to the architecture nor related to the non-linear dose-response functions. (In the setting of gene perturbation experiments that we emulate here, the dose responses are set to the identity, and thus no doser MLP hyperparameters are needed.) 
After randomly choosing 100 configurations of the parameter space and training a CPA model for 2000 epochs on each, we compute the MMD between predictions and ground truth of 7 ID evaluation environments and select the configuration that minimizes the MMD. The optimal configuration is shown in~\cref{tab:simdata:cpa:hyperparameters}.

\begin{figure}[tbp]
    \centering
    \includegraphics[width=\textwidth]{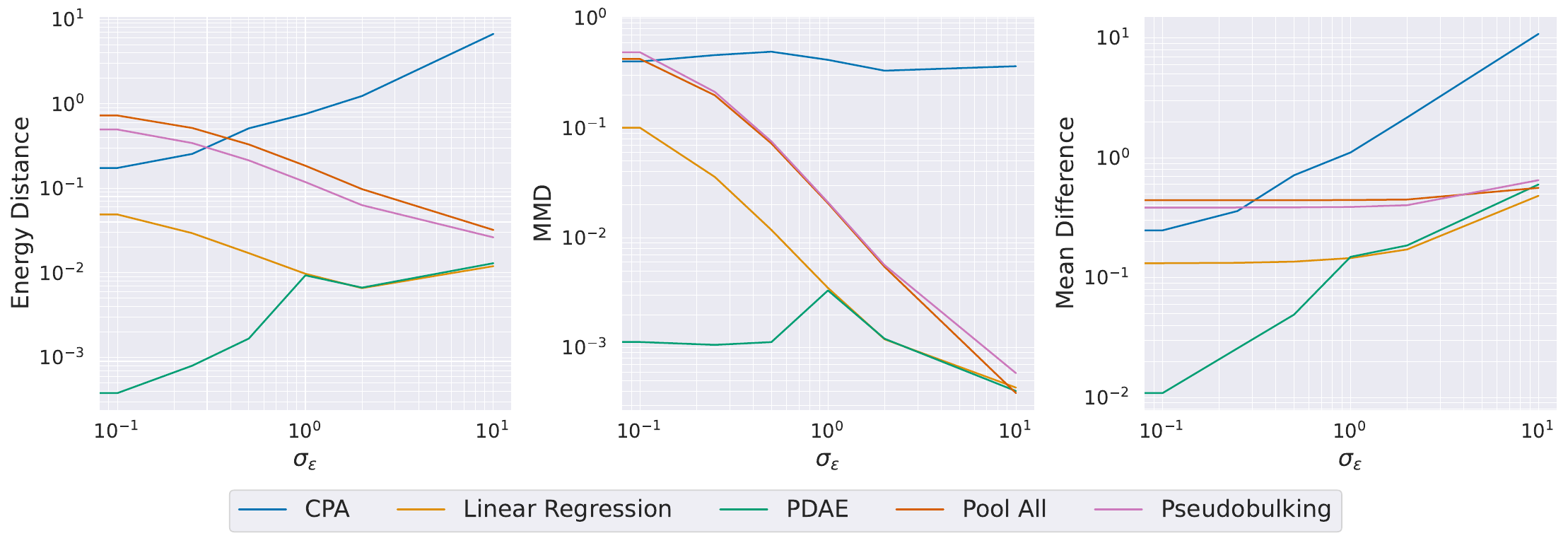}
    \caption{\small \looseness-1 
    \textbf{Evaluation of Robustness Under Increasing Levels of Noise.} We compare the ID test performance of the baseline methods, CPA, and PDAE for higher dimensional observations with additional noise dimensions appended while increasing the level of noise $\sigma_\varepsilon$. 
}
    \label{fig:robustness_experiment}
\end{figure}

\subsection{Robustness experiment with varying signal-to-noise ratio}
\label{app:robustness_experiment}
\looseness-1 
In this experiment, each observation $\xb_{e,i}$ is generated from two signal dimensions $\zb_{e,i}^\text{pert} \in \mathbb{R}^2$ and eight noise dimensions $\varepsilon_{e,i}^\text{pert} \in \mathbb{R}^8$ as input for the nonlinear mixing function, as described at the end of~\cref{sec:experiments}. We evaluate model robustness under increasing levels of noise by varying the standard deviation $\sigma_\varepsilon$ of the noise variable $\varepsilon \sim \mathbb{Q}_\epsilon$ in the data-generating process. The list of values of $\sigma_\epsilon$ considered is $[0,\ 0.1,\ 0.25,\ 0.5,\ 1,\ 2,\ 10]$. We choose the same hyperparameters as described in~\cref{sec:experiments}, with the exception of increasing CPA's bottleneck dimension to 10 and PDAE's decoder model noise dimension to 8. Like this, both models are equipped with a decoder input dimension that fits the dimensionality of observations. On each noise level, we run 8 trials of random search for both methods. For CPA, we tune the set of parameters described in~\cref{sec:experiments}. For PDAE, we tune the reconstruction loss weight $\lambda$, the level of decoder noise and the learning rate. For both methods, the model with the lowest MMD on 7 ID validation environments is selected. Both PDAE and CPA models are then trained for 2000 epochs and evaluated on 7 ID test environments. One random seed is used and shared across data generation, model initialization, training, and evaluation. The results are summarised in~\cref{fig:robustness_experiment}.
As can be seen, PDAE outperforms the other methods across low and moderate levels of $\sigma_\epsilon$ and converges to the performance of Linear Regression at higher levels of noise. 
As $\sigma_\epsilon$ grows large, most signal is lost, and the performance of all methods tends to that of the pooling baseline, as expected.

\begin{figure}[tbp]
  \centering
  \begin{subfigure}{0.33\textwidth}
    \centering
  \begin{tikzpicture}[scale=2]
    \draw[-stealth] (0,0) -- (2.5,0) node[right] {};
    \draw[-stealth] (0,0) -- (0,2.5) node[above] {};
    \draw (1,0) node[below] {1} -- (1,-0.05);
    \draw (2,0) node[below] {2} -- (2,-0.05);
    \draw (0,1) node[left] {1} -- (-0.05,1);
    \draw (0,2) node[left] {2} -- (-0.05,2);
    \fill[orange!30] (0,0) rectangle (2,2);
    \node at (1,1.5) {\Large OOD};
    \fill[blue!30] (0,0) rectangle (1,1);
    \node at (0.5,0.5) {\Large ID};
    \draw (0,0) rectangle (2,2);
    \draw (0,0) rectangle (1,1);
  \end{tikzpicture}
  \caption{}
    \label{fig:simdata:id-ood-regions}
    \end{subfigure}%
  \begin{subfigure}{0.66\textwidth}
        \includegraphics[width=\textwidth]{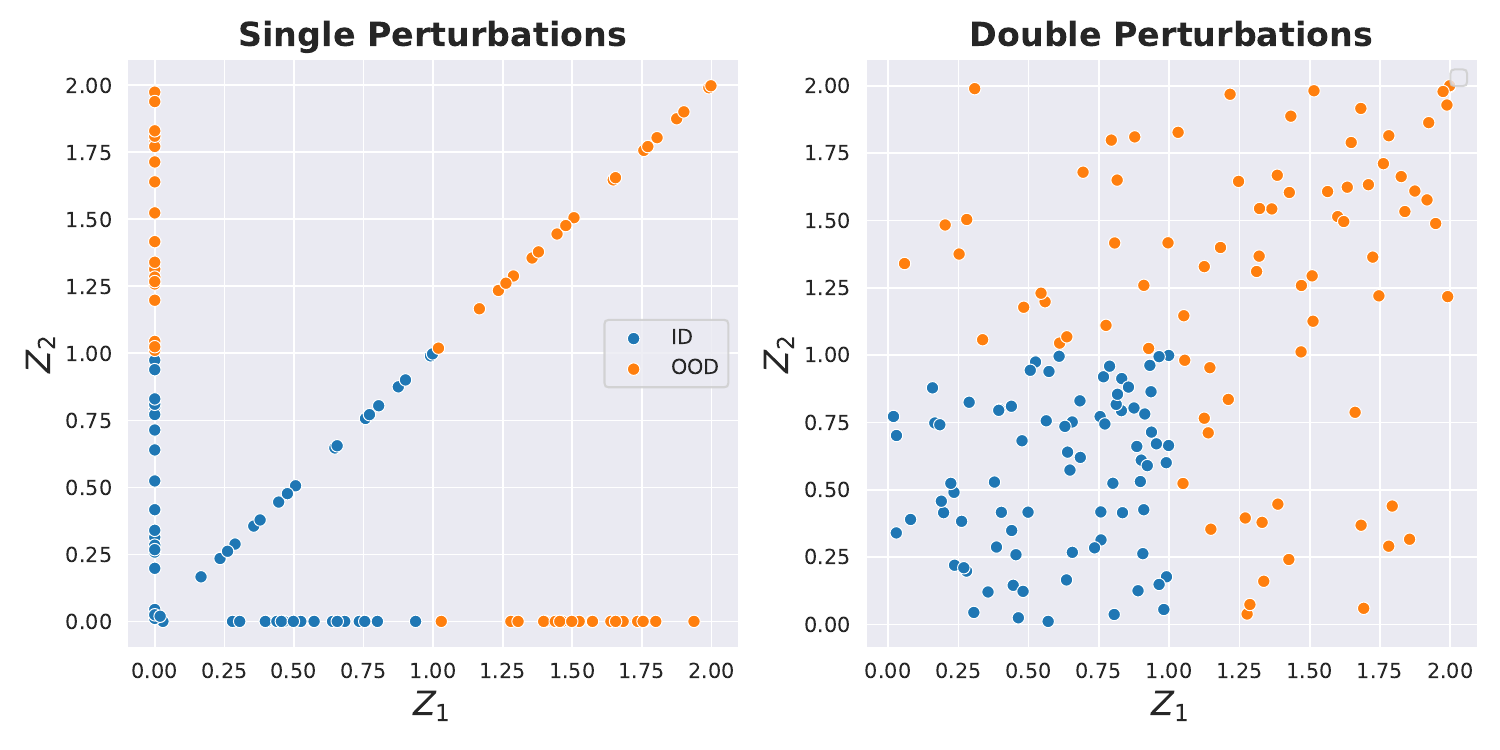}
        \caption{}
          \label{fig:id_ood_meanshifts}
  \end{subfigure}
  \caption{\textbf{(a) Sampling space for ID and OOD test cases.}
    Representation of the ID region (in blue) as defined by the elementary mean shifts in the perturbation matrix and a possible OOD region (in orange) for mean shifts in the Euclidean plane. \textbf{(b) Mean shifts of generated ID and OOD perturbation labels.} Mean shifts from single perturbations (left) are generated from 3 different sampling schemes as detailed above. Mean shifts from double perturbations (right) are generated from 4 different sampling schemes. Overall, $(3+4)*10$ shifts are visualized for ID and OOD. }
\end{figure}

\subsection{Details on validation and test cases} 
\looseness-1 
For every random seed, we randomly generate 14 ID and OOD perturbation labels.
From the fixed training perturbations and the random ID and OOD perturbations, we then generate the two-dimensional latents and corresponding observations. The 14 ID perturbation labels are then %
split into 7 validation labels used for model selection (hyperparameter tuning) and 7 test labels to evaluate performance. 
An example of the resulting ground truth distributions for all synthetic train and test cases is visualized in~\cref{fig:dgp_gt_random_labels}.

\textbf{Random sampling of test cases.}
To make the choice of perturbation labels for validation and testing purposes less arbitrary, we set up a random sampling scheme. We propose two separate schemes for generating ID and OOD perturbation labels since their corresponding mean shifts lie within disjoint regions of the Euclidean plane, as shown in \cref{fig:simdata:id-ood-regions}. The ID region of mean shifts is constrained by our training perturbation labels $\ab_0,...,\ab_3$. Together with the elementary mean shifts $\wb_1,\wb_2,\wb_3$, i.e., the columns of the perturbation matrix $\Wb$%
, the columns of $\Wb\Ab$ constitute the vertices of the unit square as depicted in \cref{fig:simdata:id-ood-regions}. For ease of visualization, we choose the square in the first quadrant with origin at $[0,0]$ and edge length of 2 to constrain the region of OOD mean shifts. 
We now explain how the ID and OOD perturbation labels are sampled such that the corresponding mean shifts adhere to the ID and OOD region, respectively. 

Single perturbation labels like our training labels only contain one non-zero entry.
We therefore sample uniformly from 
\begin{equation}
  \Rcal^\text{ID}_\text{single} = \Big\{(a,0,0):a\in[0,1]\Big\}\cup\Big\{(0,b,0):b\in[0,1]\Big\}\cup\Big\{(0,0,c):c\in[0,1]\Big\}
\end{equation}
For the OOD single perturbation labels, the the non-zero entries lie in $[1,2]$
\begin{equation}
  \Rcal^\text{OOD}_\text{single}=\Big\{(a,0,0):a\in[1,2]\Big\}\cup\Big\{(0,b,0):b\in[1,2]\Big\}\cup\Big\{(0,0,c):c\in[1,2]\Big\}
\end{equation}
Similarly, to generate double perturbations from the ID region, we draw uniformly from
\begin{equation}
    \Rcal^\text{ID}_\text{double}=\Big\{[a,b,0]: a,b \in [0,1]\Big\}\cup \Big\{[a,0,c]: c \in [0,1], a \in [0,1-c]\Big\} \cup   \Big\{[0,b,c]: c \in [0,1], b \in [0,1-c]\Big\}
\end{equation}
where the restriction to $[1-c]$ ensures that shifts remain within the ID region.

To create OOD double perturbation labels that imply admissible mean shifts, we define
\begin{align}
    \Rcal_1 &= \Big\{[a,b,0]: a \in [0,2], b_0 \in [0,1], b=\bm{1}_{\{a \geq 1\}}2b_0 + \bm{1}_{\{a < 1\}}(b_0 + 1)\Big\}
    \\
    \Rcal_2 &= \Big\{[a,b,0]: b \in [0,2], a_0 \in [0,1], a=\bm{1}_{\{b \geq 1\}}2a_0 + \bm{1}_{\{b < 1\}}(a_0 + 1)\Big\}
    \\
    \Rcal_3 &=\Big\{[a,0,c]: c \in [0,2], a_0 \in [0,1], a=a_0(2-\max(1,c)) + \max(1-c, 0)\Big\}
    \\
    \Rcal_4 &= \Big\{[0,b,c]: c \in [0,2], b_0 \in [0,1], b=b_0(2-\max(1,c)) + \max(1-c, 0)\Big\}
\end{align}
and then draw uniformly from
\begin{equation}
    \Rcal^\text{OOD}_\text{double}
    =
    \Rcal_1
    \cup
    \Rcal_2
    \cup 
    \Rcal_3
    \cup   
    \Rcal_4.
\end{equation}

We visualize the effects of 70 generated ID and 70 OOD perturbation labels as mean shifts of base state latents in \cref{fig:id_ood_meanshifts}, i.e., we visualize $ \mub_e^z = \Wb\ab_e$ for $e \in \{1,...,140\}$.

The number of perturbation labels is chosen as a multiple of 7 since we construct 3 single and 4 double perturbation settings for ID and OOD. 

The implied mean shifts for the 4 training, 7 generated ID validation, 7 generated ID test and 14 generated OOD test environments is visualized in \cref{fig:dgp_gt_random_labels}. 

\begin{figure}[tbp]
  \centering
  \includegraphics[width=\textwidth]{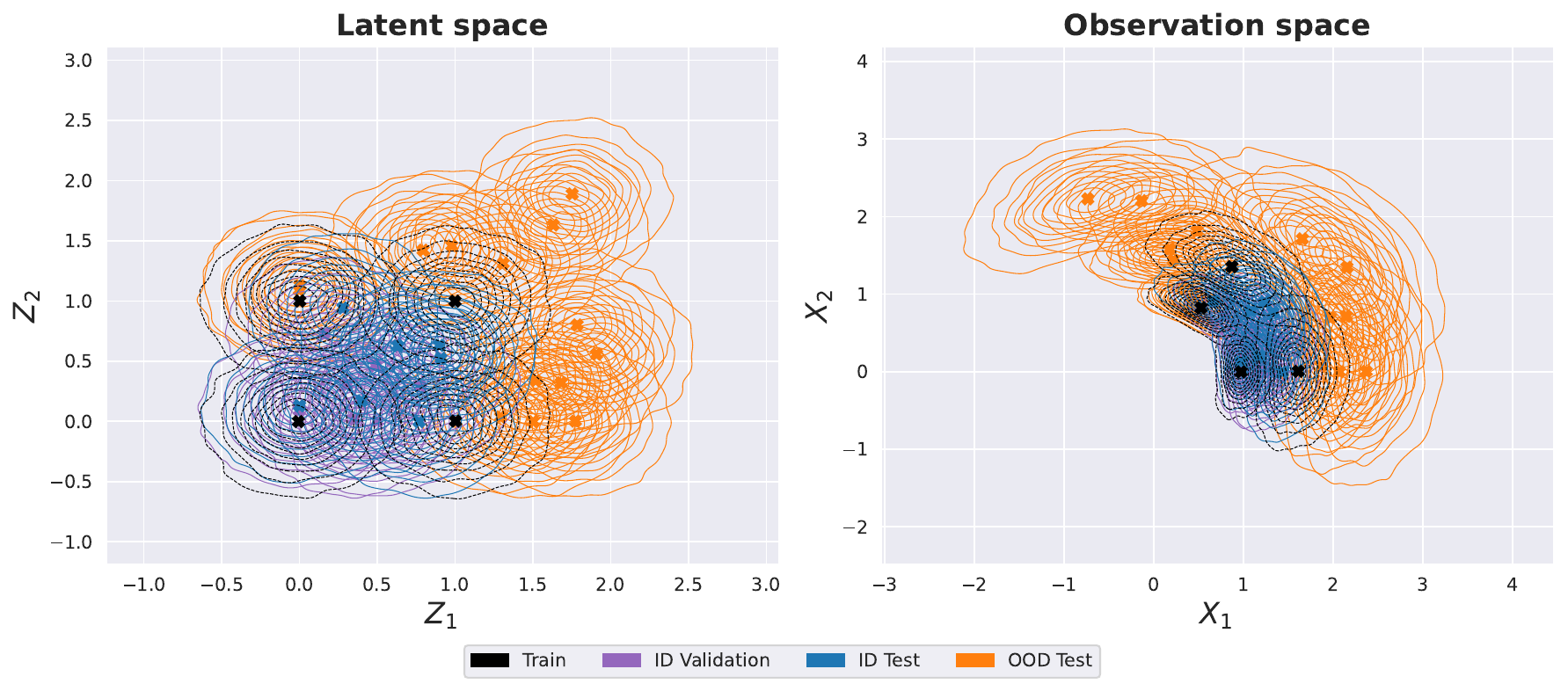}
  \caption{\small \looseness-1 
    \textbf{Fixed train environments and randomly sampled validation and test environments with continous perturbation labels} We visualize the Kernel Density Estimates of the ground truth distribution of the 4 fixed training environments as well as the randomly sampled 14 ID and 14 OOD environments. The ID environments are split into 7 for validation and 7 for testing. The means of the distributions are visualized with crosses. 
  }
  \label{fig:dgp_gt_random_labels}
\end{figure}

\section{Additional background material and related work}
\looseness-1 
Since the problem of interest~(\cref{sec:problem_setting}) involves making distributional predictions for new perturbation conditions, we review some basics of probabilistic forecasting~(\cref{sec:forecasting}) and measuring the similarity between two distributions~(\cref{sec:distributional_similarity}), in our case typically between an empirical distribution and its predicted counterpart. 
We then turn to representation learning with encoder-decoder architectures~(\cref{sec:representation_learning}), in particular a recent approach that also targets distributional reconstruction.
Finally, we cover some prior efforts on perturbation modeling and extrapolation~(\cref{sec:CPA}), which we draw inspiration from.

\subsection{Probabilistic forecasting and scoring rules}
\label{sec:forecasting}
\looseness-1 
Let $\Omega$ be a sample space, 
$\Acal$ a $\sigma$-algebra of subsets of~$\Omega$,
and $\Pcal$ a convex class of probability measures on~$(\Omega, \Acal)$.
A \emph{probabilistic prediction} or \textit{probabilistic forecast} %
is a mapping into $\Pcal$, which outputs predictive distributions $\PP$ over
over outcomes~${x\in\Omega}$.
Probabilistic forecasting can thus be viewed as a distributional generalization point prediction (i.e., deterministic forecasting), which maps directly into~$\Omega$.

To evaluate, compare, or rank different forecasts, it is useful to assign them a numerical score reflecting their quality.
A \textit{scoring rule} is a function $S:\Pcal \times \Omega\to \RR$ that assigns a score $S(\PP, x)$ to forecast~$\PP$ 
if event $x$ materializes, with higher scores corresponding to better forecasts---akin to (negative) loss or cost functions for point predictions.
If $x$ is distributed according to $\QQ$, we denote the \textit{expected score} by $S(\PP,\QQ)=\EE_{x\sim\QQ}[S(\PP,x)]$.
The scoring rule is called \textit{proper} if $S(\QQ,\QQ)\geq S(\PP,\QQ)$ for all $\PP$, and \textit{strictly proper} if equality holds if and only if~$\PP=\QQ$.

\textbf{CRPS.} For continuous scalar random variables (i.e., $\Omega=\RR$), a popular scoring rule is the \textit{continuous ranked probability score}~\citep[CRPS;][]{matheson1976scoring}. When $\Pcal$ is the space of Borel probability measures on~$\RR$ with finite first moment, it is strictly proper and given by:\footnote{The original definition 
by \citet[][]{matheson1976scoring}
is $\CRPS(F,x)=-\int_{-\infty}^\infty (F(y)-1\{y\geq x\})^2\mathrm{d}y$ where $F$ is the CDF of $\PP$, but the simpler form in~\eqref{eq:CRPS} has been shown to be equivalent~\citep[Lemma~2.2]{baringhaus2004new}.}%
\begin{equation}
\label{eq:CRPS}
    \CRPS(\PP,x)=\frac{1}{2}\EE_{X,X'\,\overset{\text{i.i.d.}}{\sim}\,\PP} \abs{X-X'}-\EE_{X\sim\PP} \abs{X-x} \,.
\end{equation}
If $\PP$ is a point mass, the negative CRPS reduces to the absolute error loss function; it can thus be viewed as a generalization thereof to probabilistic forecasts. 

\textbf{Energy score.} 
\Citet{gneiting2007strictly} propose the \textit{energy score} as a multi-variate generalization of the CRPS for vector-valued 
$\xb\in\Omega=\RR^{\dimX}$. 
For $\beta\in(0,2)$, it is defined by
\begin{equation}
    \label{eq:energy_score}
    \ES_\beta(\PP,\xb)=\frac{1}{2}\EE_{\Xb,\Xb'\,\overset{\text{i.i.d.}}{\sim}\,\PP} \norm{\Xb-\Xb'}_2^\beta-\EE_{\Xb\sim\PP} \norm{\Xb-\xb}_2^\beta\,,
\end{equation}
where $\norm{\,\cdot\,}_2$ denotes the Euclidean (L2) norm. $\ES_\beta$ is strictly proper w.r.t.\ $\Pcal_\beta$, the set of Borel probability measures $\PP$ for which  $\EE_{\Xb\sim\PP}\norm{\Xb}_2^\beta$ is finite~\citep{gneiting2007strictly}. 

\subsection{Assessing distributional similarity}
\label{sec:distributional_similarity}
\textbf{Energy distance.} The expected energy score $\ES_\beta(\PP,\QQ)=\EE_{\Yb\sim\QQ}[\ES_\beta(\PP,\Yb)]$ consitutes a measure of similarity between $\PP$ and $\QQ$ and is closely linked to the \textit{energy distance}~\citep{szekely2013energy}:
\begin{align}
    \label{eq:energy_distance}
    \ED_\beta(\PP,\QQ)&=2\EE_{\Xb\sim\PP,\Yb\sim\QQ}\norm{\Xb-\Yb}_2^\beta - \EE_{\Xb,\Xb'\iidsim\PP}\norm{\Xb-\Xb'}_2^\beta - \EE_{\Yb,\Yb'\iidsim\QQ}\norm{\Yb-\Yb'}_2^\beta \\
    &=2\left(\ES_\beta(\QQ,\QQ) - \ES_\beta(\PP,\QQ) \right)\geq 0
\end{align}
with equality if and only if $\PP=\QQ$, since $\ES_\beta$ is a strictly proper scoring rule.

\textbf{Maximum mean discrepancy (MMD).}
Another well-known distance between probability measures that is rooted in kernel methods~\citep{scholkopf2002learning} is the \textit{maximum mean discrepancy}~\citep[MMD;][]{gretton2012kernel}, which for a positive definite kernel $k:\Omega\times\Omega\to\RR$ is given by
\begin{equation}
    \label{eq:MMD}
\MMD^2_k(\PP,\QQ)=
\EE_{X,X'\iidsim\PP}[k(X, X')] 
-2\EE_{X\sim\PP,Y\sim\QQ}[k(X,Y)]
+ \EE_{Y,Y'\iidsim\QQ}[k(Y,Y')]\,.
\end{equation}

\textbf{Energy distance as a special case of MMD.}
As shown by~\citet{sejdinovic2013equivalence}, if $k$ in~\eqref{eq:MMD} is chosen to be the positive-definite \textit{distance kernel}%
\footnote{induced by the negative-definite semi-metric $\rho(\Xb,\Yb)=\norm{\Xb-\Yb}_2^\beta$ on $\Omega=\RR^{\dimX}$ (centered at the origin),}
\begin{equation}
\label{eq:distance_induced_kernel}
k_\beta(\Xb,\Yb)=
\frac{1}{2}\left(\norm{\Xb}_2^\beta+\norm{\Yb}_2^\beta -\norm{\Xb-\Yb}_2^\beta\right)
\end{equation}
then the energy distance is recovered as a special case of MMD,
\begin{equation}
    \label{eq:relation_ED_MMD}
    \ED_\beta(\PP,\QQ)=2 \MMD^2_{k_\beta}(\PP,\QQ)\,.
\end{equation}

\subsection{Representation learning}
\label{sec:representation_learning}
Many modern data sources of interest contain high-dimensional and unstructured  observations~$\xb$,  
such as audio, video, images, or text. 
Representation learning aims to transform such data into a more compact, lower-dimensional set of features~$\zb$ (the representation) which preserves most of the relevant information while making it more easily accessible, e.g., for use in downstream tasks~\citep{bengio2013representation}.\footnote{Representation learning is thus closely related to the classical task of dimension reduction. The former usually refers to nonlinear settings, involves some form of machine learning, and tends to be more focused on usefulness in terms of downstream tasks, rather than, say, explained variance.}
For example, a 
representation of images of multi-object scenes could be a list of the contained objects, along with their size, position, colour, etc.
A key assumption underlying this endeavour is the so-called \textit{manifold hypothesis} which posits that high-dimensional natural data tends to lie near a low-dimensional manifold embedded in the high-dimensional ambient space;
this idea is also at the heart of several (nonlinear) dimension reduction techniques~\citep{cayton2005algorithms,tenenbaum2000global,saul2003think,belkin2001laplacian}.

\textbf{Autoencoder (AE).}
An autoencoder~\citep[AE;][]{hinton2006reducing,rumelhart1986learning} is a pair of functions $(\enc,\dec)$, consisting of an \textit{encoder} $\enc:\RR^{\dimX}\to\RR^{\dimZ}$ mapping observations $\Xb\sim\PP$ to their representation $\Zb:=\enc(\Xb)$, and a \textit{decoder} $\dec:\RR^{\dimZ}\to\RR^{\dimX}$ mapping representations $\Zb$ to their reconstructions in observation space, $\Xbh:=\dec(\Zb)=\dec(\enc(\Xb))$.
Typically, $\dimZ<\dimX$, such that there is a bottleneck and perfect reconstruction is not feasible. 
Autoencoders are usually trained with a (point-wise, mean) reconstruction objective, i.e., the aim is to minimise the mean squared error
\begin{equation}
\label{eq:reconstruction_loss}
 \Lcal_\textsc{ae}(\enc,\dec;\PP)
 :=
 \EE_{\Xb\sim\PP}\norm{\Xb-\Xbh}_2^2 = 
 \EE_{\Xb\sim\PP}\norm{\Xb-\dec(\enc(\Xb))}_2^2\,,
\end{equation}
w.r.t.\ both $\enc$ and $\dec$.
As a result, the optimal decoder $\dec^*_\textsc{ae}$ for any fixed encoder choice $\enc$ is given by the conditional mean
\begin{equation}
    \label{eq:optimal_decoder_AE}
    \dec^*_\textsc{ae}(\zb;\enc, \PP)=\EE_{\Xb\sim\PP}[\Xb|\enc(\Xb)=\zb]\,.
\end{equation}

\textbf{Distributional principal autoencoder (DPA).}
Since the objective of a standard AE is mean reconstruction, 
the distribution of reconstructions $\Xbh$ is typically not the same as the %
distribution 
of $\Xb$
(unless the encoder is invertible, i.e, the compression is lossless and perfect reconstruction is feasible, which is usually not the case in practice).
To address this, \citet{shen2024distributional}~proposed the distributional principal autoencoder (DPA) which targets distributional
(rather than mean) reconstruction. 
A DPA also consists of an encoder-decoder pair $(\enc,\dec)$. However, unlike in a standard AE, the DPA decoder $\dec:\RR^{\dimZ}\times \RR^{d_\epsilon}\to\RR^{\dimX}$ is stochastic in that it  receives an additional noise term $\epsilonb$ as input, which is distributed according to a fixed %
distribution $\QQ_{\epsilonb}$ such as a standard isotropic Gaussian. %
\iftrue
The DPA loss function is constructed such that
for a fixed encoder~$\enc$, the optimal DPA decoder $\dec^*_\textsc{dpa}$ 
maps a 
 given latent embedding~$\zb$ 
to the distribution of 
$\Xb$, given $\enc(\Xb)=\zb$,
 i.e., 
\begin{equation}
    \label{eq:optimal_decoder_DPA}
    \dec^*_\textsc{dpa}(\zb,\epsilonb; \enc)\overset{d}{=}
    \left(\Xb|\enc(\Xb)=\zb\right)\,,
\end{equation}
where $\overset{d}{=}$ denotes equality in distribution.
This means that the decoder evaluated at $\zb$ should
match 
the distribution of all realizations of $\Xb$
that are mapped by the encoder to $\zb$.
\fi
At the same time, the DPA encoder aims to minimise the variability in the distributions in~\eqref{eq:optimal_decoder_DPA} by encoding the first $\dimZ$ ``principal'' components. 
As shown by~\citet{shen2024distributional}, both of these goals are achieved by the following DPA objective,
\begin{equation}
\begin{aligned}
\label{eq:dpa_loss}
 \Lcal_\textsc{dpa}(\enc,\dec;\PP%
 )
 &=
 \EE_{\Xb\sim\PP,\epsilonb\sim\QQ_{\epsilonb} %
 }\norm{\Xb-\dec(\enc(\Xb),\epsilonb)}_2^\beta 
 -\frac{1}{2} \EE_{\Xb\sim\PP,\epsilonb,\epsilonb'\iidsim \QQ_{\epsilonb}%
 }\norm{\dec(\enc(\Xb),\epsilonb)-\dec(\enc(\Xb),\epsilonb')}_2^\beta
 \\
 &= -\EE_{\Xb\sim\PP}\left[
 \ES_\beta\left(
 \dec\left(\enc(\Xb),\,\cdot\,\right)_\#\QQ_{\epsilonb},\Xb\right)
 \right]\,,
\end{aligned}
\end{equation}
where $\dec\left(\zb,\,\cdot\,\right)_\#\QQ_{\epsilonb}$ denotes the pushforward distribution of $\QQ_{\epsilonb}$ through the function $\dec(\zb,\,\cdot\,)$, i.e., the distribution of $\dec(\zb,\epsilonb)$ for a fixed $\zb$ when $\epsilonb\sim\QQ_\epsilonb$.
In other words, a DPA minimizes the negative expected energy score between $\Xb$ %
and the corresponding (stochastic) decoder output, conditional on the %
encoding of $\Xb$.
Due to this conditioning, the DPA objective differs from an energy distance by a ``normalization constant'' which depends on the encoder and encourages capturing principal (i.e., variation-minimizing) components, rather than random latent dimensions.

\subsection{Perturbation modeling}
\label{sec:CPA}

\textbf{Compositional perturbation autoencoder (CPA).}
\citet{lotfollahi2023predicting} propose the compositional perturbation autoencoder (CPA) as a model for compositional extrapolation of perturbation data. 
Specifically, they assume the following model:
\begin{align}
\label{eq:CPA_perturbation_model}
    \zb^\pert&=\zb^\base+\Wb^\pert 
    \begin{pmatrix}
       h_1(a_1)\\
       \dots\\
       h_K(a_K)
    \end{pmatrix}
    +\sum_{j=1}^J\Wb_j^\text{cov}\cbb_j\,,
\end{align}
where $\zb^\base\sim\PP_\Zb$ denotes an unperturbed basal state; the matrix $\Wb^\pert\in\RR^{\dimZ\times K}$ encodes the additive effect of each elementary perturbation; $\{h_k:\RR\to\RR\}_{k\in[K]}$ are unknown, possibly nonlinear dose-response curves; $\{\cbb_j\in\RR^{K_j}\}_{j\in[J]}$ are observed one-hot vectors capturing $J$ additional discrete covariates, such as cell-types or species; and the matrices $\{\Wb_j^\text{cov}\in\RR^{\dimZ\times  K_j}\}_{j\in[J]}$ encode additive covariate-specific effects.
Further, the basal state $\zb^\base$ is assumed to be independent of the perturbation labels $\ab$ and covariates $\Cb=(\cbb_1, \dots, \cbb_J)$.
Observations $\xb$ are then drawn from a %
Gaussian whose mean and variance are determined by the perturbed latent state $\zb^\pert$,
\begin{equation}
\label{eq:CPA_decoder}
    \xb\sim \Ncal\left(\mub\left(\zb^\pert\right),\sigma^2\left(\zb^\pert\right)\Ib\right)\,.
\end{equation}
To fit this model, \citet{lotfollahi2023predicting} employ an adversarial autoencoder~\citep{lample2017fader}. First, an encoder~$\enc$ estimates the basal state 
\begin{equation}
\label{eq:CPA_encoder}
    \zbh^\base=\enc(\xb)\,.
\end{equation}
\looseness-1 The estimated perturbed latent state~$\zbh^\pert$ is then computed according to~\eqref{eq:CPA_perturbation_model} using~\eqref{eq:CPA_encoder} and learnt estimates  $\Wbh^\pert$, $\{\widehat h_k\}$, and $\{\Wbh_l^\text{cov}\}$.
Finally, a (deterministic) decoder $\dec$ uses $\zbh^\pert$ to compute estimates of the mean and variance in~\eqref{eq:CPA_decoder}, i.e., $(\widehat\mub, \widehat\sigma^2)=\dec(\zbh^\pert)$.
All learnable components of the model are trained by minimizing the (Gaussian) negative log-likelihood of the observed data $\Dcal=\{(\xb_i,\ab_i,\Cb_i)\}_{i\in[N]}$.
To encourage the postulated independence of $\zbh^\base$ and $(\ab,\Cb)$, an additional adversarial loss is used, which minimizes the predictability of the latter from the former. 

\textbf{Comparison between PDAE and other biological perturbation models.}
Our model from~\cref{sec:model} is closely related to prior work on modeling combinations of genetic or chemical perturbations. CPA~\citep[][]{lotfollahi2023predicting} 
employs an adversarial autoencoder for estimation, see~\cref{sec:CPA} for details. 
SVAE+~\citep{lopez2023learning} and SAMS-VAE~\citep{bereket2023modeling} instead fit a generative model by maximising the evidence lower bound (ELBO; a lower bound to the marginal likelihood) 
within the variational autoencoder (VAE) framework~\citep{kingma2013auto,rezende2014stochastic}. 
This requires evaluating the likelihood $p(\xb|\zb)$, and thus generally comes with parametric assumptions. 
In contrast, PDAE is based on classical (non-variational) autoencoders and is trained using the energy score, which only requires being able to sample from the stochastic decoder. 
Both SVAE+ and SAMS-VAE additionally assume that the effects of perturbations on the latents are sparse (cf.~\cref{app:algorithm}).
SAMS-VAE and CPA adopt the same compositional perturbation model as~\eqref{eq:perturbation_model}, wherein the effect of perturbations can be decomposed into a sum of effects of elementary perturbations, captured by the linear form $\Wb\ab$.
In contrast, SVAE+ extends work on identifiable VAEs with sparsely changing mechanisms~\citep{lachapelle2022disentanglement,lachapelle2024nonparametric} and models each perturbation condition separately.
Anotther difference to CPA~\citep{lotfollahi2023predicting} and SAMS-VAE~\citep{bereket2023modeling} is that these approaches seek an encoder that maps to the latent basal state, regardless of the domain~$e$, rather than to the perturbed latent state as in PDAE.
Importantly, 
none of these prior perturbation models provide theoretical identifiability or extrapolation guarantees.

\end{document}